\pdfoutput=1
\documentclass{article} 
\usepackage{iclr2025_conference,times}


\usepackage{amsmath,amsfonts,bm}









\def\eqref#1{equation~\ref{#1}}









\def\1{\bm{1}}










\DeclareMathAlphabet{\mathsfit}{\encodingdefault}{\sfdefault}{m}{sl}
\SetMathAlphabet{\mathsfit}{bold}{\encodingdefault}{\sfdefault}{bx}{n}













\usepackage{hyperref}
\usepackage{url}
\usepackage{booktabs}       
\usepackage{amsthm}
\usepackage{amsfonts}       
\usepackage{nicefrac}       
\usepackage{microtype}      
\usepackage{xcolor}         
\usepackage{amsmath}
\usepackage[defaultcolor=red]{changes}
\usepackage{enumitem}
\usepackage{algorithm}
\usepackage{algorithmic}
\usepackage{multirow}
\usepackage{tabularx}
\usepackage{graphicx}
\usepackage{natbib}
\usepackage{wrapfig}
\usepackage{subfigure}
\newtheorem{theorem}{Theorem}
\newtheorem*{theorem*}{Theorem}
\setcitestyle{square,comma,numbers}

\title{Sequential Stochastic Combinatorial Optimization Using Hierarchical Reinforcement Learning}


\iclrfinalcopy

\author{Xinsong Feng$^{1}$, Zihan Yu$^{2}$, Yanhai Xiong$^{3}$, Haipeng Chen$^{3}$\\
$^{1}$UCLA, $^{2}$The University of Hong Kong, $^{3}$William \& Mary \\
\texttt{xsfeng@ucla.edu, u3634664@connect.hku.hk, \{yxiong05, hchen23\}@wm.edu} \\
}


%

\setlength {\marginparwidth}{2cm} 
\begin{document}

\maketitle

\begin{abstract}

Reinforcement learning (RL) has emerged as a promising tool for combinatorial optimization (CO) problems due to its ability to learn fast, effective, and generalizable solutions. 
Nonetheless, existing works mostly focus on one-shot deterministic CO, while sequential stochastic CO (SSCO) has rarely been studied despite its broad applications such as adaptive influence maximization (IM) and infectious disease intervention. 
In this paper, we study the SSCO problem where we first decide the budget (e.g., number of seed nodes in adaptive IM) allocation for all time steps, and then select a set of nodes for each time step. The few existing studies on SSCO simplify the problems by assuming a uniformly distributed budget allocation over the time horizon, yielding suboptimal solutions. We propose a generic hierarchical RL (HRL) framework called wake-sleep option (WS-option), a two-layer option-based framework that simultaneously decides adaptive budget allocation on the higher layer and node selection on the lower layer. 
WS-option starts with a coherent formulation of the two-layer Markov decision processes (MDPs), capturing the interdependencies between the two layers of decisions. Building on this, WS-option employs several innovative designs to balance the model's training stability and computational efficiency, preventing the vicious cyclic interference issue between the two layers. Empirical results show that WS-option exhibits significantly improved effectiveness and generalizability compared to traditional methods. Moreover, the learned model can be generalized to larger graphs, which significantly reduces the overhead of computational resources.

\end{abstract}
\section{Introduction}





Combinatorial optimization (CO) problems cover a broad spectrum of application domains, such as social networks \cite{schuetz2022combinatorial, angelini2023modern}, public health \cite{gireesan2024deep, lee2021mind}, transportation \cite{cacchiani2009models}, telecommunications \cite{cao1999applying}, and scheduling \cite{oren2021solo, yin2023accelerating}. 
Hence, much attention has been drawn from both the theory and application communities toward solving CO problems. 
Recently, there have been studies using reinforcement learning (RL) to learn generalizable heuristic policies for CO problems on graphs~\citep{khalil2017learning,nazari2018reinforcement,deudon2018learning,bengio2020machine,kool2018attention,berto2023rl4co}. 
The RL policy is trained on a set of previously seen training graphs and generalizes to unseen test graphs of similar structure.
This policy is usually represented using graph embedding methods such as structure2vec (S2V)~\citep{dai2016discriminative} and graph convolutional networks (GCNs)~\citep{kipf2016semi}. 
However, most existing studies focus on one-shot deterministic CO, while research on another broad class of CO problems, which contain \textit{stochasticity} and must be solved \textit{sequentially} over multiple time steps, remains limited. 



\textbf{Problem Statement}\quad
In this paper, we study the sequential stochastic CO (SSCO) problem where we first decide the budget allocation for all the time steps, and then select a combination of nodes for each time step. Given a graph $G=(V, E)$, where $V$ and $E$ represent the set of vertices and edges on the graph, respectively, the number of time steps $T$, and the total budget $K$ for selecting nodes, the task is to decide 1) an allocation $K_1,K_2,\ldots,K_T$ of budget over the time horizon, together with 2) a sequence of node sets $S_{1},S_{2},\dots,S_{T} \subseteq V$, such that
\begin{subequations} \label{eq:formulation}
\begin{align}
& \underset{\begin{subarray}{c} 
    K_1, \ldots, K_T \\ 
    S_1, \ldots, S_T 
\end{subarray}}{\text{maximize}} 
\quad && \sum\nolimits_{t=1}^{T} r_t(S_t) \\
& \text{subject to} 
\quad && \sum\nolimits_{t=1}^{T} K_t \leq K, \label{eq:higher-layer constraint} \\
& 
&& |S_t| \leq K_t, \quad \forall t = 1, \ldots, T, \label{eq:lower-layer constraint} \\
& 
&& K_t \in \mathbb{N}, \quad \forall t = 1, \ldots, T, \\
& 
&& S_t \subseteq V, \quad \forall t = 1, \ldots, T.
\end{align}
\end{subequations}
Here, $r_t(S_t)$ denotes the reward obtained from the nodes in set $S_t$ at the time step $t$. 
SSCO has a wide range of real-world applications such as resource allocation in wireless communication networks \cite{liang2019deep}, route planning \cite{joe2020deep}, and adaptive influence maximization (IM) \cite{peng2019adaptive, tong2020time}.


As far as we know, there are not many works that address SSCO using RL; two notable examples are adaptive IM in \cite{chen2021contingency} and medical resource allocation for epidemic control in \cite{ou2021active}. 
However, they assume that the budget is evenly allocated across all the time steps (i.e., $K_t=K/T, \ \forall t$), which may not be optimal in stochastic environments or when additional information or feedback becomes available over time. 
These scenarios need adaptive sequential decision-making. 
For example, in the adaptive IM problem, an even allocation policy cannot adapt to sudden changes and uncertainties, leading to resource shortages at critical times or redundancy at less important times. 
Additionally, it lacks the flexibility to adjust strategy when more important information becomes available.

Despite its criticality, solving SSCO is extremely challenging due to the following factors: 1) \textit{Combinatorial complexity}. The search space grows exponentially as the graph grows, which makes it difficult to find the optimal solution efficiently. 2) \textit{Stochasticity}. Future states and rewards are uncertain, so the ability to adaptively update the solutions based on the real-time state and stochasticity in the future is required.
 3) \textit{Sequential decision making}. Decisions are interdependent and require sophisticated planning that considers both current and future consequences. 4) \textit{Scalability}. The methods need to have the ability to be generalized to large-scale problems in the real world.


While traditional RL struggles with these challenges, hierarchical RL (HRL) presents a viable solution by decomposing complex tasks into simpler sub-tasks to improve learning efficiency. Several works have contributed to the development of HRL. The option framework models temporally extended actions, allowing agents to learn at different levels of abstraction \cite{sutton1999between}. Multi-level approaches use higher-level managers to set tasks for sub-managers \cite{dayan1992feudal}. Data-efficient methods leverage offline experiences for training \cite{nachum2018data}. Combining HRL with intrinsic motivation \cite{kulkarni2016hierarchical} and hindsight experience replay (HER) \cite{andrychowicz2017hindsight} further enhances learning.


In this paper, we propose an HRL framework, WS-option, for solving SSCO by jointly considering budget allocation and node selection, with a coherent definition of MDPs for both layers. The lower layer is particularly challenging due to the need for explicit definitions of states and rewards.
The MDP formulation directly affects the overall performance of the model. Maintaining model stability is crucial due to the interdependence and mutual influence of the two layers. To improve model stability, especially in the higher layer, we adopt MC methods to learn budget allocation, known as the option policy. To ensure faster convergence and stable operation of the lower layer, we use TD methods to learn node selection, known as the intra-option policy \cite{bacon2017option}. These methods ensure that both layers can run stably and efficiently, avoiding the issue of potential vicious mutual interference.

Our main contributions are as follows.
\begin{itemize}[leftmargin=*]
    \item We are the first to formally summarize and define the generic class of SSCO problems. 
    \item We design a novel HRL algorithm, WS-option, to solve the formulated SSCO, with major novelties in terms of the two-layer MDP formulation, and new wake-sleep training procedures to balance the training stability and efficiency and avoid the potential vicious mutual interference. 
    Our algorithm is a generic framework that applies to a wide range of SSCO problems of the same nature. 
    For completeness, we also provide a brief theoretical analysis of the algorithm's convergence.
    \item We conduct experiments on two distinct SSCO problem instances -- adaptive IM and route planning. 
    The results show that compared to traditional methods, our algorithm exhibits superior performance in solving both SSCO problems. 
    In particular, the algorithm is able to generalize to larger unseen graphs, thus significantly reducing computational overheads for larger-scale SSCO problems.
\end{itemize}

\section{Related work}


\textbf{RL for CO}\quad
RL has been widely used to solve CO problems \cite{mazyavkina2021reinforcement}. \citet{nazari2018reinforcement} propose a generic RL framework for solving the vehicle routing problem (VRP), outperforming traditional heuristic algorithms. \citet{bello2016neural} introduce neural combinatorial optimization, combining neural networks and RL to solve the TSP problem. \citet{kool2018attention} develop an attention-based model using the REINFORCE algorithm, achieving superior results on various CO problems like TSP, VRP, and the Orienteering Problem (OP). \citet{khalil2017learning} combine RL with graph embedding to construct solutions for CO problems step by step. \citet{chen2019learning} present a method where neural networks iteratively improve combinatorial solutions through learned local modifications. \citet{deudon2018learning} introduce a method where policy gradient techniques are used to train neural networks to develop heuristics for solving the TSP. \citet{emami2018learning} propose Sinkhorn policy gradient methods to learn permutation matrices for CO problems. \citet{cappart2019improving} combine decision diagrams with deep RL to enhance optimization bounds for combinatorial problems. \citet{lu2019learning} introduce an iterative approach that leverages REINFORCE to enhance solutions for VRPs. Recent methods have further integrated search algorithms, such as active search \citep{hottung2021efficient}, Monte Carlo tree search \citep{fu2021generalize}, and beam search \citep{kwon2020pomo}, to enhance the solution qualities of RL algorithms during inference time. While these algorithms show promising results, they typically tackle one-shot deterministic CO problems without sequential decision-making, limiting their effectiveness for more complex scenarios.

\textbf{Graph embedding}\quad
Graph embedding techniques are crucial for RL-based solutions to CO problems, representing graph structures in continuous vector spaces to preserve structural properties, facilitate downstream RL tasks, and generalize to unseen (potentially larger) graphs. 
Node2Vec \citep{grover2016node2vec} uses biased random walks to explore neighborhoods efficiently, which maps nodes to low-dimensional vector spaces while preserving the neighbor structures.
Deepwalk \citep{perozzi2014deepwalk}, which is inspired by language models, uses truncated random walks to obtain local node information and learn potential representations. Significant progress has also been made with graph neural networks (GNN) \cite{scarselli2009graph, zhou2020graph, wu2021comprehensive}, such as GCN \cite{kipf2016semi}, Graph Attention Networks (GAT) \cite{velivckovic2018graph} and Graph Transformer Networks \cite{yun2019graph}. \citet{xu2018powerful} evaluate the expressiveness of GNNs, comparing them to the Weisfeiler-Lehman test in capturing graph structures. \citet{li2018deeper} analyze the effectiveness and mechanisms of GCNs in semi-supervised learning. \citet{ying2019gnnexplainer} introduce a method to interpret GNNs by identifying the crucial subgraphs and features that influence their predictions.
Nonetheless, determining the most beneficial graph embedding technique for RL tasks remains an important research question.

\textbf{Hierarchical reinforcement learning}\quad
HRL introduces a hierarchical structure to RL, decomposing complex tasks into simpler subtasks to improve learning efficiency. 
The option framework \cite{sutton1999between} defines HRL with options as temporally extended actions, which enables agents to learn at different levels of temporal abstraction. 
\citet{dayan1992feudal} introduce a multi-level RL approach based on subgoals, where high-level managers set tasks for sub-managers to achieve efficiently. 
\citet{nachum2018data} propose a data-efficient HRL method using off-line experiences to train higher-level and lower-level layers, and the results demonstrate its ability to learn complex behaviors. 
\citet{bacon2017option} develop a method to automatically learn policy, termination function, and intra-option policy in HRL using intra-option policy gradient and termination gradient, without extra rewards or subgoals. \citet{vezhnevets2017feudal} divide agent behavior into two levels: the Manager, which sets goals in a latent space, and the Worker, which executes these goals with primitive actions.
\citet{kulkarni2016hierarchical} combine hierarchical structures and intrinsic motivation to improve learning in complex tasks. \citet{andrychowicz2017hindsight} introduce a technique called HER that enhances RL by learning from failures as if they were successes with different goals, and \citet{levy2018learning} integrate HER with HRL to enhance learning at multiple levels.
Despite their success, these methods often do not address the specific challenges of SSCO problems, especially the need to handle sequential, combinatorial state-action spaces, and dynamic environments. Furthermore, they typically do not integrate graph-based representations, which could significantly enhance their effectiveness when dealing with graph-based CO problems.
\section{The WS-option framework}
This section will show how to learn budget allocation and node selection using our proposed WS-option framework. 
We begin with a brief introduction to the option framework, followed by the architecture used in our approach. 
Subsequently, we will formulate the MDPs for both hierarchical layers and adopt value-based methods as the backbone RL approach. 
In particular, to balance training stability and computational efficiency, we employ MC methods to learn the Q-function for the higher layer and TD-learning to learn the Q-function for the lower layer. 
Additionally, it is important to note that our model is trained on different graphs that follow a certain distribution to enhance its generalization ability to unseen, similarly-structured graphs from the same distribution.
The network architecture is shown in Appendix \ref{appendix:model}.


\subsection{Preliminary: the option framework} \label{sec:option framework}
The option framework, proposed by \cite{sutton1999between}, is an implementation of temporal abstraction in RL. 
It provides a straightforward way to describe temporally extended actions, known as options, which are higher-level actions composed of several primitive actions. 
Formally, an option $o$ is defined by the tuple $(I, \pi^{II}, \beta)$, where 
\begin{itemize}[leftmargin=*]
    \item Initiation set $I$: a set of states where the option can be initialized. We will assume that any state satisfies $s \in I$ in our problems. 
    \item Intra-option policy $\pi^{II}$:  a policy that maps states to actions under a given option, controlling the lower-level actions taken under that option. We use $\pi^{II}$ as the lower-level node selection policy.
    \item Termination function $\beta$: a mapping $\beta:S\to[0,1]$ that specifies the probability of terminating the option in a given state, where $S$ is the state space.
\end{itemize}
When an agent operates within the option framework, it follows the process: 1) Option selection -- the agent selects an option based on its option policy given a state. 2) Option execution -- once the option is determined, the agent follows the intra-option policy $\pi^{II}$ until the option terminates. 3) Option termination -- when the termination condition is met, the agent terminates the current option and proceeds to the next option selection. 

In our study, we adopt a slightly different approach to defining options. 
While at the higher layer, the option $o^{I}_t$ we select represents the budget $K_{t}$ allocated at the current time step $t$, we simplify the information representation when passing the option to the lower layer. 
Specifically, we use option $o^{II}_t=1$ to indicate that the lower layer should continue selecting nodes, and option $o^{II}_t=0$ to indicate that it should stop selecting nodes and interact with the environment. 
It is important to note that to handle cases where the budget allocated by the higher layer is 0 (i.e., $o^{I}_t=o^{II}_t=0$), we introduce a \textit{null action} $a_{t,\emptyset}$ for the lower layer. The null action $a_{t,\emptyset}$ represents directly interacting with the environment and moving to the next time step. 
In fact, we achieved very good results by introducing this null action.

\subsection{Hierarchical MDP formulation}
Here, we define the MDPs for the two layers of our hierarchical model (c.f.  Figure \ref{fig:hrl-ssco}). 
For ease of understanding, we will take the adaptive IM problem as a running example. 
More details of both problems are shown in Appendix \ref{description:aim} and \ref{description:rp}, respectively.

\begin{figure}[htbp]
    \centering
    \includegraphics[width=0.85\textwidth]{./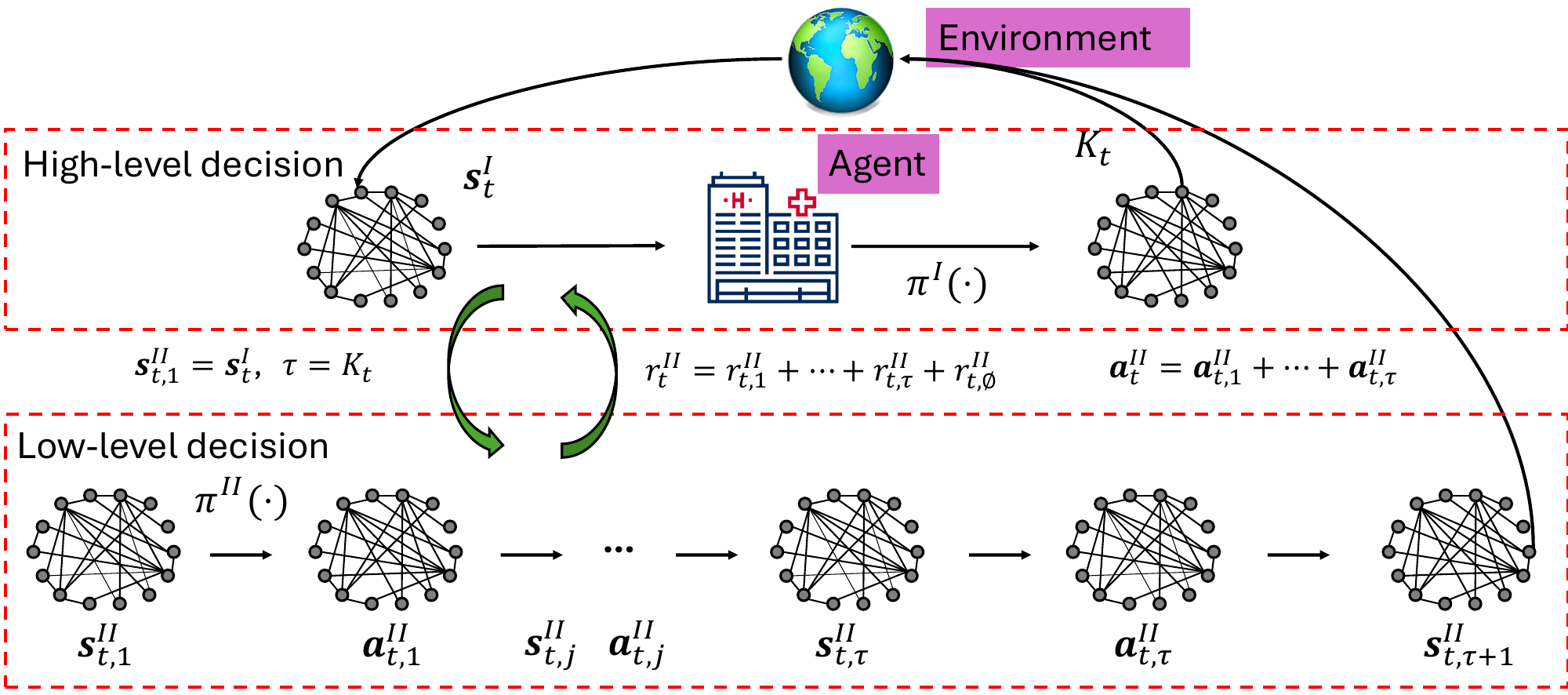}
    \caption{Hierarchical MDPs for SSCO}
    \label{fig:hrl-ssco}
\end{figure}
\subsubsection{Higher layer MDP}







\textbf{State}\quad
The state $s^I=(X, g)\in S$ includes two types of information: node features $X$ (e.g., inactive or active in adaptive IM problem) and global information $g$ (e.g., the remaining time steps $T_{r}$ and the remaining budget $K_{r}$).

\textbf{Option}\quad
The option $o^{I}$ represents the budget $0\leq K_t\leq K_{r}$ allocated to the current time step $t$. 

\textbf{State transition}\quad
The state transition in this layer is stochastic, which is a fundamental characteristic of SSCO problems.
Given the current state $s_t^I$ and the action $a_t^{II}$ from the lower layer, the state $s^I_{t+1}$ is updated according to the following function:
\begin{equation}
    s_{t+1}^I = f(s_t^I, a_t^{II}) + \eta(s_t^I, a_t^{II}),
\end{equation}
where $f(s_t^I, a_t^{II})$ is a deterministic function that updates the state $s_t^I$ based on the current action $a_t^{II}$, while $\eta(s_t^I, a_t^{II})$ represents the stochastic component, which introduces randomness into the state transition, thereby capturing the stochastic nature of the environment.
For example, in the adaptive IM problem, there is no deterministic component.
Instead, the entire state transition is governed by the stochastic process $\eta(s_t^I, a_t^{II})$, which can be interpreted as the "activation process" within this problem.

\textbf{Reward}\quad The total reward depends on the problem of interest.
In adaptive IM, it is the total number of influenced nodes (e.g., active and removed nodes). Then, the immediate reward $r^{I}$ is the increase in the total reward.

\subsubsection{Lower layer MDP} 
\textbf{State}\quad
The state $s^{II}$  is also defined as $s^{II}=\{X, g\}$.

\textbf{Option}\quad
The option $o^{II}$ can be either 0 or 1, where "1" represents continuing to select nodes, and "0" represents not selecting any node and interacting with the environment. 

\textbf{Action}\quad
Two kinds of actions are defined. Given the current time step $t$, the action $a^{II}_{t, j}$ is the node selected for a sub time step $j$ (e.g., $o_{t,j}^{II}=1$) or just the null action $a_{t, \emptyset}$ (e.g., $o_{t,j}^{II}=0$).
And the $a_t^{II}$ is all the nodes selected in the current time step or the null action $a_{t, \emptyset}$ if no nodes are selected.

\textbf{State transition}\quad
The state transition can occur either in a deterministic or stochastic way. 
If option $o^{II}_t$ is 1 and the action is node $v$, we update the state of node $v$ deterministically (e.g., activating node $v$). 
If option $o$ is 0, the agent interacts with the environment as described in the higher layer MDP.

\textbf{Reward}\quad
The reward in the lower layer significantly affects the convergence of the model. 
We first define the reward $r^{II}_{t,\emptyset}$ for the null action $a_{t, \emptyset}$, which means directly interacting with the environment at time step $t$. 
$r^{II}_{t,\emptyset}$ depends on the specific problem, and the intuition is to determine the expected reward when moving to the next day if we do not select any node in the current state.  
For propagation problems, we can choose $r^{II}_{t,\emptyset}=r_t^I - k_{\text{eff}}$, where $k_{\text{eff}}$ represents the effective number of nodes in action $a_t^{II}$ (e.g., number of inactive nodes in $a_t^{II}$) that can yield actual gain. 
For problems without a propagation process (e.g., TSP, RP), we can simply set this reward to 0. 
Next, we define the marginal reward $m_{t,v}$ for node $v$. 
We denote the selected nodes in the current sub time step $j$ as $A_t=\{a_{t,v_1},...,a_{t,v_{j-1}} \}$. The marginal reward is obtained as follows 
\begin{equation}
    m_{t,v} = R_t(s^I_t, A_t\cup\{a_{t,v}\}) - R_t(s^I_t, A_t),
\end{equation}
where $R_t(S,A)$ is the immediate reward from taking action $A$ in state $S$. 
In our experiments, this reward is estimated by averaging the results of multiple (typically 10) simulations.
As a result, the reward for selecting node $v$ is given by
\begin{equation} \label{eq:reward}
    r^{II}_{t,v}=\frac{m_{t,v}}{\sum_{u\in a^{II}_{t}}m_{t,u}}(r^I_{t}-r^{II}_{t,\emptyset}).
\end{equation}

In this manner, the total reward at each time step for the lower layer will remain consistent with the higher layer, enhancing the model's stability.

\subsection{WS-option}



Due to the high interdependencies between the learned Q-values across the two policy layers, along with advantages such as reduced sample complexity, we employ a value-based RL for both layers.
In our framework, the policies of both layers are $\epsilon$-greedy.

\begin{algorithm}[htbp]
\caption{WS-option: Overall training}
\label{algo:training}
\begin{algorithmic}[1]
\STATE Initialize experience buffer $\mathcal{M}_1$ and $\mathcal{M}_2$ for the higher and lower layer, respectively.
\STATE Initialize Q-networks for both layers.
\FOR{epoch $i=1,..,N$}
    \STATE Sample a Graph $G$ from the distribution $\mathbb{D}$.
    \FOR{episode $j=1$ \TO $M$}
        \IF{$i \leq N/2$}
            \STATE Set the flag \texttt{use\_fixed\_higher\_layer} to \texttt{true}.
        \ELSE
            \STATE Set flag \texttt{use\_fixed\_higher\_layer} to \texttt{false}.
        \ENDIF
        \STATE Run an episode with the flag \texttt{use\_fixed\_higher\_layer} (see Algorithm \ref{algo:run}).
        \STATE Store transitions (see Algorithm \ref{algo:store-trans}).
    \ENDFOR
    \STATE Sample a sequential minibatch of the most recent transitions from the replay buffer $\mathcal{M}_1$.
    \STATE Train the higher layer using MC methods with sampled transitions.
    \STATE Sample a random minibatch of transitions from the replay buffer $\mathcal{M}_2$.
    \STATE Train the lower layer using TD-learning with sampled transitions.
\ENDFOR
\end{algorithmic}
\end{algorithm}

\begin{algorithm}[htbp]
\caption{WS-option: Run episode}
\label{algo:run}
\begin{algorithmic}[1]
\STATE Initialize the state $s^I_1$ and temporary storage $\mathcal{T}$ for episode transitions.
\FOR{step $t=1$ \TO $T$}
    \IF{\texttt{use\_fixed\_higher\_layer} is \texttt{True}}
        \STATE $o^I_t \gets \text{average budget allocation}$.
    \ELSE
        \STATE $o^I_t \gets \begin{cases} \text{randomly selected in the remaining budget}, \text{w.p.}\ \epsilon_1, \\ \arg\max_o q^I(s^I_t, o), \text{otherwise}. \end{cases}$
    \ENDIF
    \STATE Initialize the set of selected nodes $a^{II}_t = \{\}$.
    \FOR{$k=1$ \TO $o^I_t$}
        \STATE $s^{II}_{t,k}, o^{II}_{t,k}\gets \mathrm{state\ transition}(s^I_t,o_t^{I},  a^{II}_t)$
        \STATE $a^{II}_{t,k} \gets \begin{cases} \text{randomly select a node from } V \setminus a^{II}_t, \text{w.p.},\ \epsilon_2 \\ \arg\max_{a} q^{II}(s^{II}_{t,k}, o^{II}_{t,k}, a) \text{ subject to } a \notin a^{II}_t, \text{otherwise}. \end{cases}$
        \STATE $a^{II}_t \gets a^{II}_t \cup \{ a^{II}_{t,k}\}$
    \ENDFOR
    \STATE Execute action $a^{II}_t$ and obtain reward $r^I_t$ and next state $s^I_{t+1}$.
    \STATE Store the transition $(s^I_t, o^I_t, a^{II}_t, r^I_t, s^I_{t+1})$ in the temporary storage $\mathcal{T}$.
    \STATE Update the current state: $s^I_t \gets s^{I}_{t+1}$.
\ENDFOR
\end{algorithmic}
\end{algorithm}

Moreover, as shown from the formulation of the MDPs for both layers, we can see that the Q-functions of the two layers are highly interdependent. 
On one hand, the higher-layer option selection will determine feasible action spaces for the lower-layer node selection; 
on the other hand, the specific lower-layer node selection will affect how good the higher-layer's selected option is. 
Hence, we observe that, if we simply adopt the commonly used (double) Q-learning framework \cite{watkins1992q,mnih2013playing,mnih2015human,hasselt2010double}, or more essentially off-policy TD learning \cite{sutton1988learning} for both layers, then the Q-values for both layers, especially the first layer will tend to diverge. 
This is because of the estimation bias derived from the deep models (see Appendix \ref{appendix:model}) and bootstrapping of TD learning. 
We can see the result in Figure \ref{fig:q-value2}.
To stabilize the training of both layers, while maintaining computational efficiency, we propose the following two designs for our training algorithm, i.e., wake-sleep training and layer-wise learning method selection. 
The pseudocode of the training algorithm is shown in Algorithm \ref{algo:training}. 
During each epoch, a graph is obtained from a certain distribution $\mathbb{D}$ (line 4). 
We divide all epochs into two stages (lines 6-10), which correspond to the wake-sleep training procedure described in this section. 
After we run algorithm \ref{algo:run} and \ref{algo:store-trans}, we can use the transitions to train both layers (lines 14-16). 
Note that during the training of the higher layer, transistions are sampled sequentially (line 14).
Although the higher-layer policy, based on MC method, is inherently on-policy and does not require stored transitions, we maintain this to ensure consistency across both layers.

Algorithm \ref{algo:run} shows the process of running episodes. 
We first get the option (lines 3-6), then choose node one by one (lines 8-13). 
Subsequently, we run Algorithm \ref{algo:store-trans} (see Appendix \ref{appendix:algo3}) to store transitions for training.
Algorithm \ref{algo:store-trans} shows how to store transitions, especially for the lower layer. 
Since the second layer does not interact with the environment after selecting a single node and suffers from sparse rewards, we first obtain the marginal rewards for each transition state through simulation (lines 5-10). 
To keep the total reward of the lower layer consistent with the higher layer, we scale all the marginal rewards (lines 12-15), which does not affect the relative $Q$ value of the actions.


\subsubsection{Wake-sleep training}
The primary challenge in HRL lies in ensuring stable training and model convergence. 
Notably, training both layers together can lead to instability, as the policies of each layer are interdependent and can interfere with one another.
To address this issue, we devise a wake-sleep approach to enhance training stability. 
As we will show in Section \ref{sec:convergence}, this proposed wake-sleep training procedure is well-aligned with our theoretical analysis.
In Figure \ref{fig:q-value1}, we present the Q value learned by traditional HRL, where both layers are trained simultaneously from the beginning, and our proposed WS-option.
Then, we will provide the details of this wake-sleep training procedure.

In the \textit{sleep} stage of this training paradigm, the Q-function of the higher layer is initially frozen and an average budget allocation strategy is used instead. 
While more advanced strategies could be considered, we choose the simplest one to allow the lower layer to approach convergence under the current high-layer policy.
The lower layer's Q-function is sufficiently trained along with the average budget allocation strategy in this stage. 
Meanwhile, the high layer's Q-function will be pre-trained offline using the trajectories of the average strategy. 
In the subsequent \textit{wake} stage, both layers are trained online simultaneously, allowing for the fine-tuning and optimization of their interdependent policies, which ultimately lead to convergence.


\subsubsection{Layer-wise learning method selection} 

MC methods provide unbiased estimates by calculating the expected value via a complete trajectory, offering reliable and accurate results when the sample size is sufficient. In contrast, TD methods may reduce the accuracy of the Q estimation if errors occur during the intermediate process, known as error propagation. Additionally, in practice, TD methods often result in the overestimation of Q values. Hence, using MC methods to provide more reliable Q value estimates for the higher layer can enhance the model stability.
Our experimental results show that this trick, combined with wake-sleep training, is one of the key components that makes the HRL training successful. 
Figure \ref{fig:q-value2} shows the Q values learned using MC and TD methods. 
The Q values from TD methods are monotonic, while those from MC methods are concave.


\begin{figure}[htbp]
    \subfigure[WS-option vs. traditional HRL]{
    \centering
    \includegraphics[width=0.48\textwidth]{./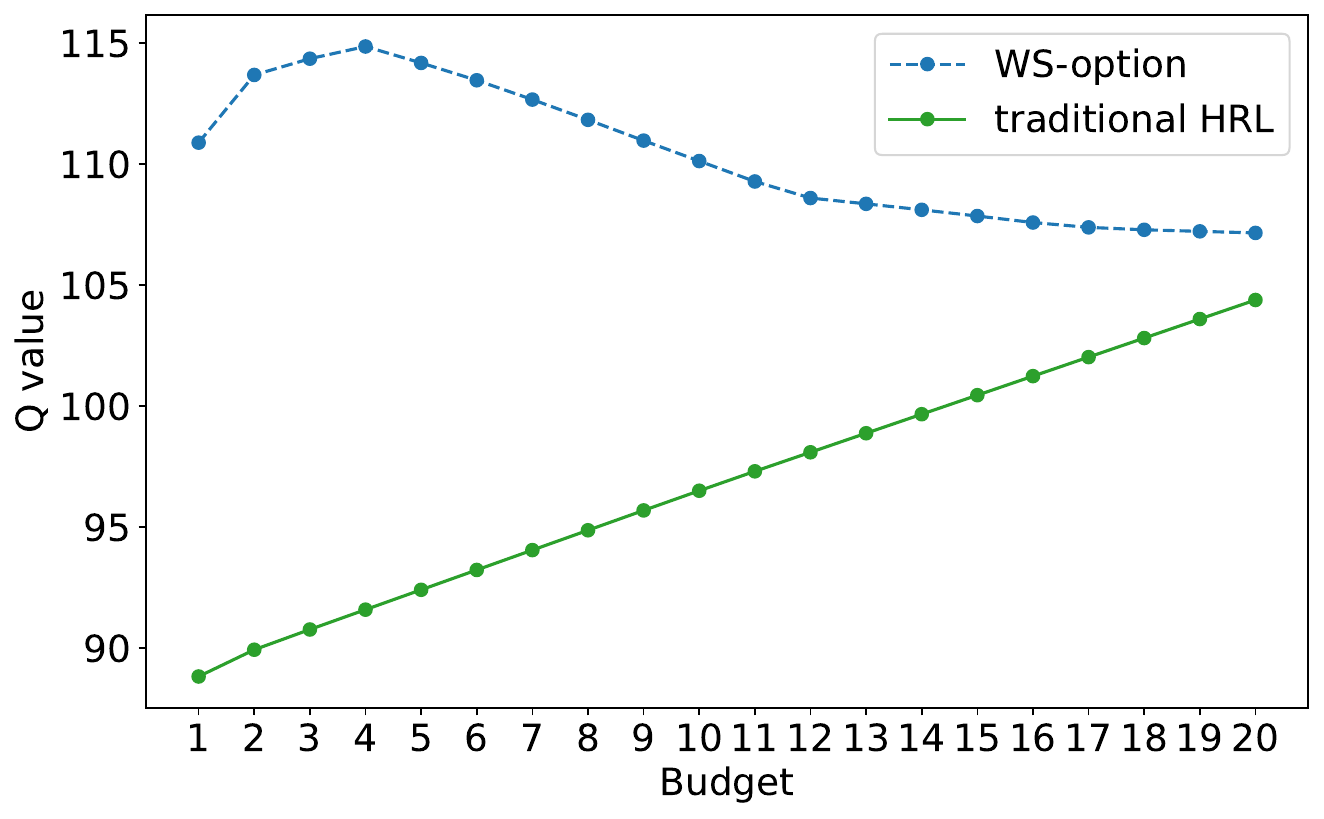}
    \label{fig:q-value1}
    }\subfigure[MC methods vs. TD methods]{
    \centering
    \includegraphics[width=0.44\textwidth]{./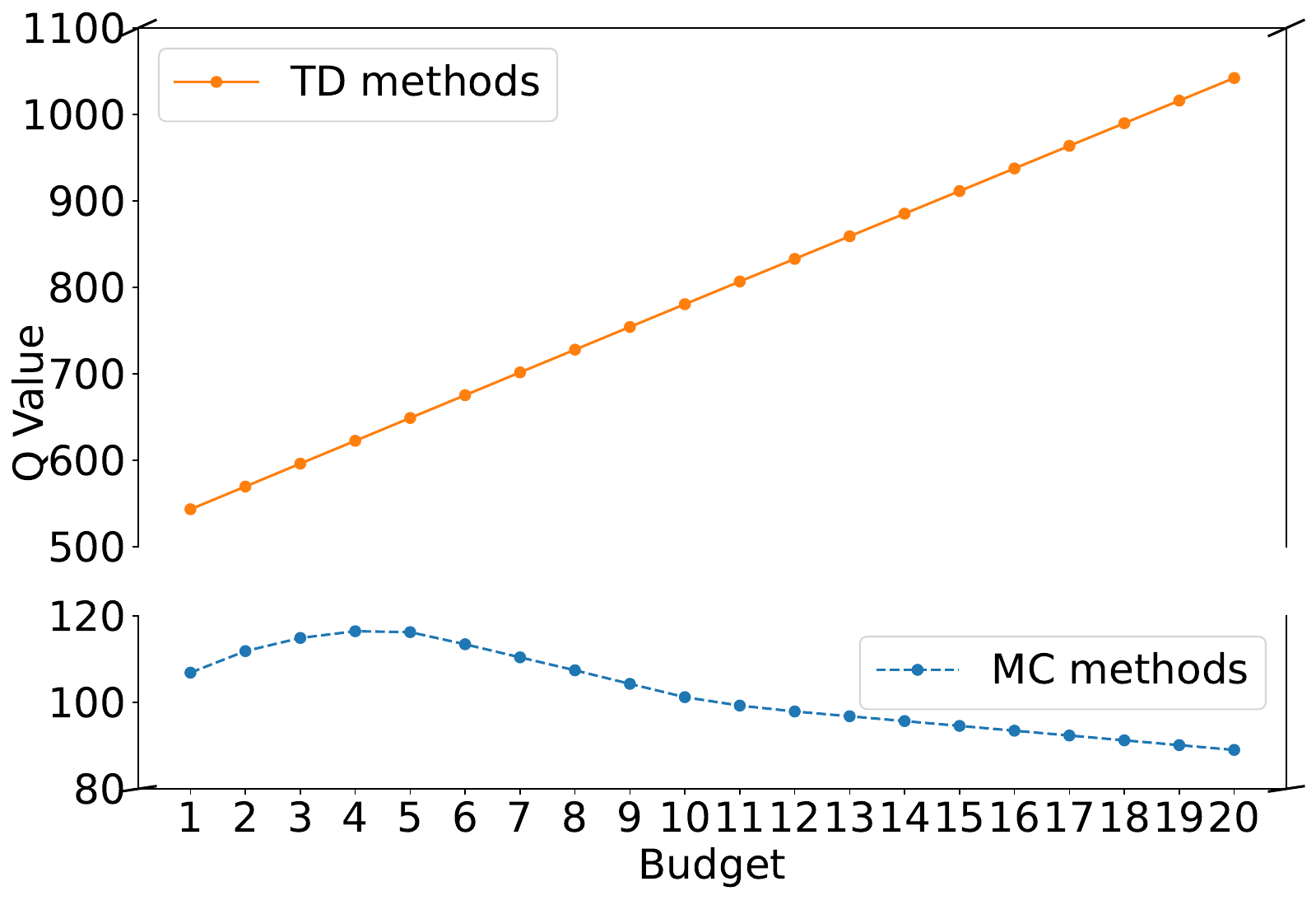}
    \label{fig:q-value2}
    }
    \caption{Q-values learned for the AIM problem $(T=10, K=20)$.}
\end{figure}


On the downside, MC methods are usually more data-hungry because they have to wait until the end of a trajectory to get a training sample. Given that the lower-layer Q-learning (which is essentially a subtask that uses RL to solve one-shot CO problems) has been demonstrated to be reliable ~\citep{khalil2017learning,nazari2018reinforcement,deudon2018learning,bengio2020machine,kool2018attention,berto2023rl4co}, we choose the off policy TD-learning (Q-learning) framework for the lower-layer Q-function update. 
TD-learning has a faster convergence speed compared to MC methods. When training in the sleep stage, the rapid feedback of the TD method can help the lower layer converge faster, so that the whole model will be better initialized with more optimized parameters. This in turn will accelerate the co-training in the wake stage.

\subsubsection{Convergence analysis}\label{sec:convergence}
For completeness of our study and to offer intuition for our framework, we present a concise theoretical analysis of the algorithm's convergence.
We first discuss the convergence of the intra-option policy and give Theorem \ref{theorem:1}.
Then, we show the convergence of the option policy and give the Theorem \ref{theorem:2}. 
The proofs are provided in Appendix \ref{proof:theorem1} and Appendix \ref{proof:theorem2}, respectively.
\begin{theorem}\label{theorem:1}
\textnormal{(Intra-option policy convergence)}. In our WS-option framework, given any Markov transition $(s_\tau, o_\tau, a_\tau, r_\tau, s_{\tau+1}, o_{\tau+1})$, the Q-value function $q^{II}(s_\tau, o_\tau, a_\tau)$ converges to the optimal Q-value function $q^{II}_*(s_\tau, o_\tau, a_\tau)$ with probability 1, assuming that the higher-layer policy is fixed.
\end{theorem}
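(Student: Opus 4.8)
The plan is to reduce the lower-layer update to a standard Q-learning recursion on a suitably augmented, stationary MDP, and then invoke the classical stochastic-approximation convergence guarantee. First I would use the hypothesis that the higher-layer policy is fixed, so that the budget allocation—and hence the sequence of options fed to the lower layer—becomes a fixed (possibly state-dependent) rule rather than a moving target. Under this frozen policy I would treat the pair $\tilde{s}_\tau = (s_\tau, o_\tau)$ as an augmented state and $a_\tau$ as the action, and argue that the transition kernel and reward over $\tilde{s}$ are time-homogeneous: the deterministic node-activation updates and the stochastic activation process $\eta$ no longer depend on the epoch once the option policy is frozen, and the null action $a_{t,\emptyset}$ supplies a well-defined transition for the zero-budget case $o^{I}_t = o^{II}_t = 0$. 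This recasts intra-option learning as ordinary Q-learning over $\tilde{s}$, whose optimal value is exactly $q^{II}_*$.

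Next I would verify the standard hypotheses of the Watkins--Dayan convergence theorem \cite{watkins1992q}. The rewards are bounded, since the marginal rewards $m_{t,v}$ are bounded on a finite graph and the normalization in \eqref{eq:reward} is a convex reweighting that merely redistributes the bounded per-step quantity $r^I_t - r^{II}_{t,\emptyset}$. The $\epsilon$-greedy exploration of the lower layer (with exploration probability $\epsilon_2 > 0$) ensures that every feasible augmented state-action pair is visited infinitely often along training. Assuming a learning-rate schedule obeying the Robbins--Monro conditions $\sum_\tau \alpha_\tau = \infty$ and $\sum_\tau \alpha_\tau^2 < \infty$, the contraction property of the Bellman optimality operator on the stationary augmented MDP then yields almost-sure convergence of $q^{II}$ to $q^{II}_*$.

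The main obstacle I anticipate is establishing genuine stationarity and the Markov property of the induced lower-layer MDP, and the chief difficulty is the reward normalization. By \eqref{eq:reward} the reward assigned to node $v$ is $m_{t,v}$ divided by $\sum_{u \in a^{II}_t} m_{t,u}$ and scaled by $(r^I_t - r^{II}_{t,\emptyset})$; the denominator depends on the \emph{entire} set of nodes chosen during time step $t$, which is not determined until the within-step selection finishes. Hence the single-step reward $r_\tau$ is not a priori a function of one transition alone, which threatens the Markov assumption underlying Q-learning. I would need to show that encoding the partial selection $A_t$ and the relevant global bookkeeping (such as $K_r$ and $T_r$) into the augmented state makes the rescaled reward a legitimate, bounded, time-homogeneous function of $(\tilde{s}_\tau, a_\tau, \tilde{s}_{\tau+1})$—or, failing an exact reduction, argue that the normalization acts as a fixed per-step reweighting that preserves the greedy ordering of actions (as the paper already notes it ``does not affect the relative $Q$ value of the actions'') and therefore preserves both the optimal policy and the fixed point. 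Once the augmented process is confirmed to be a finite, stationary MDP with bounded rewards, the remainder is a direct application of the classical theorem, so I would concentrate the effort on this reduction rather than on the stochastic-approximation machinery.
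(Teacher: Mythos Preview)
Your plan is essentially the paper's approach: reduce the lower layer to ordinary Q-learning on the augmented state $(s_\tau,o_\tau)$ with the higher-layer policy frozen, then invoke a classical stochastic-approximation convergence result. The paper cites Jaakkola et al.\ (1993) rather than Watkins--Dayan and proceeds by directly checking that the tabular update is a $\gamma$-contraction in sup norm, splitting into two cases according to whether the option continues ($o_{t,\tau}>0$, $a_{t,\tau}\neq a_{t,\emptyset}$) or terminates. In the terminating case it closes the argument via the identity $q^{I}(s_{t+1},o_{t+1})=\max_a q^{II}(s_{t+1},o_{t+1},a)$, which is the one structural point you do not make explicit but which your augmentation handles implicitly.

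The chief difference is rigor, not strategy: the paper's proof is a few lines of contraction inequalities and does not engage at all with the reward-normalization issue you identify in \eqref{eq:reward}. Your concern that the normalized reward depends on the full within-step selection $a_t^{II}$ (and hence is not a function of a single transition) is legitimate and is simply glossed over in the paper; likewise the paper does not verify Robbins--Monro step sizes, bounded rewards, or infinite visitation. So your proposal is more careful than the published argument, and the extra work you plan on the Markov/stationarity reduction is effort the paper did not expend.
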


\begin{theorem}\label{theorem:2}
\textnormal{(Option policy convergence)}. In our WS-option framework, given any state-option pair $(s_t, o_t)$, the Q-value function $q^{I}(s_t, o_t)$ converges to the optimal Q-value function $q^{I}_*(s_t, o_t)$ with probability 1, assuming that for any given higher-layer policy, the lower-layer policy always provides the corresponding conditionally optimal response.
\end{theorem}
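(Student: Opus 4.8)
The plan is to reduce the two-layer problem to a single standard MDP at the higher layer and then invoke a classical Monte Carlo (MC) control convergence result. Under the hypothesis that the lower layer always returns the conditionally optimal node-selection response (which Theorem~\ref{theorem:1} supplies whenever the higher-layer policy is held fixed), I would treat the lower layer as a fixed oracle $\pi^{II}_*(\cdot \mid o^I, s^I)$. Then, for each state-option pair $(s^I_t, o^I_t)$, the lower-layer action $a^{II}_t$ is pinned down, so the immediate reward $r^I_t$ and the next-state distribution of $s^I_{t+1} = f(s^I_t, a^{II}_t) + \eta(s^I_t, a^{II}_t)$ become well-defined (stochastic) functions of $(s^I_t, o^I_t)$ alone; the lower layer's effect is absorbed into the transition kernel and the reward. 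Because the higher-layer state $s^I=(X,g)$ already encodes the remaining horizon $T_r$ and remaining budget $K_r$, this reduced process is a time-homogeneous MDP on the augmented state space with finite effective horizon $T$, which I would denote $\mathcal{M}^I$.

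Second, I would establish that $\mathcal{M}^I$ admits a well-defined optimal action-value function $q^I_*$ satisfying the Bellman optimality equation, using boundedness of $r^I_t$ and finiteness of $T$. The higher-layer learner updates $q^I$ toward the undiscounted MC return $G_t = \sum_{k=t}^{T} r^I_k$ accumulated over a full trajectory, consistent with the objective $\sum_{t=1}^T r_t(S_t)$. The key observation is that, with the lower-layer oracle fixed, $G_t$ is an unbiased sample of the return of the current $\epsilon_1$-greedy behaviour policy from $(s^I_t, o^I_t)$, so the every-visit MC average converges to $q^{I,\pi}$ for the evaluated policy $\pi$ by the strong law of large numbers.

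Third, I would close the loop via generalized policy iteration. The $\epsilon_1$-greedy option-selection rule in Algorithm~\ref{algo:run} is greedy in the limit with infinite exploration (GLIE), provided $\epsilon_1 \to 0$ while every state-option pair is still visited infinitely often, which the random-exploration branch guarantees. Combining unbiased MC evaluation with GLIE policy improvement, and imposing Robbins--Monro step sizes $\sum_t \alpha_t = \infty$ and $\sum_t \alpha_t^2 < \infty$, a standard stochastic-approximation argument for on-policy MC control then yields $q^I \to q^I_*$ with probability 1.

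The main obstacle is justifying the decoupling between the two layers. In the wake stage both layers train concurrently, so the higher-layer transition kernel is genuinely non-stationary while the lower layer is still approaching optimality; the clean reduction to $\mathcal{M}^I$ holds only in the idealized limit captured by the theorem's hypothesis. I would argue that the wake-sleep schedule is precisely what makes this hypothesis defensible, since the sleep stage drives the lower-layer Q-function near its conditional optimum before the higher-layer policy begins to move. I would then isolate the remaining subtlety, namely that the MC target stays unbiased for $\mathcal{M}^I$ despite $\epsilon_1$-greedy exploration and the time-dependence induced by the decreasing $(T_r, K_r)$; the augmented-state formulation resolves this by rendering the reduced process stationary, so that the finite-horizon returns are sampled from a fixed kernel for each visited $(s^I_t, o^I_t)$.
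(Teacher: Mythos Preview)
Your proposal is sound and, in several respects, more careful than the paper's own argument. The paper's proof writes the higher-layer MC update as
\[
q^{I}_{t+1}(s_t,o_t)=q^{I}_t(s_t,o_t)-\alpha_t(s_t,o_t)\bigl(q^{I}_t(s_t,o_t)-g_t\bigr),
\]
and then appeals directly to the stochastic-approximation theorem of Jaakkola, Jordan, and Singh: under the lower-layer best-response assumption, the return $g_t$ is unbiased for the Q-value, so $\mathbb{E}[q^{I}_t(s_t,o_t)-g_t]=0$ and the iterates converge. That is the entirety of the paper's proof.

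Your route differs in two ways. First, you make the reduction to a single-layer MDP $\mathcal{M}^I$ explicit, absorbing the lower-layer oracle into the transition kernel and reward; the paper leaves this implicit. Second, and more substantively, you separate policy \emph{evaluation} (unbiased MC returns, law of large numbers) from policy \emph{improvement} (GLIE $\epsilon_1$-greedy with Robbins--Monro step sizes), and only then conclude $q^I\to q^I_*$. The paper's invocation of Jaakkola et al.\ establishes that the MC iterates track the Q-value of the \emph{current} behaviour policy, but is silent on why the limit is the \emph{optimal} $q^I_*$ rather than $q^{I,\pi}$ for some fixed $\pi$; your GLIE argument fills exactly that gap. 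What the paper's approach buys is brevity and a single named theorem to cite; what yours buys is a transparent account of why optimality, not just evaluation, is attained.

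Two minor points. The higher-layer return in Algorithm~\ref{algo:store-trans} is discounted ($g_t=\sum_{i\ge t}\gamma^{i-1}r^I_i$), so drop the word ``undiscounted.'' And your parenthetical that Theorem~\ref{theorem:1} ``supplies'' the conditional-optimality hypothesis is a slight overreach: Theorem~\ref{theorem:1} guarantees lower-layer convergence only when the higher-layer policy is held fixed, whereas Theorem~\ref{theorem:2} \emph{assumes} the lower layer is already optimal for every higher-layer policy. Keep that as a standing hypothesis rather than a consequence.
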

As a result, the convergence of the hierarchical framework is guaranteed by the convergence of both layers. 
Although the interaction between layers will have some effects, we use the wake-sleep procedure instead. 
As illustrated in Figure \ref{fig:training}, after the sleep stage, we can get an optimal policy $\pi^{II}$ conditioned on the higher layer's policy $\pi^{I}$.
Notably, even when the higher layer policies differ, the nature of the node selection task leads to lower layer optimal policy $\pi^{II}$ being similar to any conditional optimal policy $\pi'^{II}$.
This is analogous to the Stackelberg game setting, where the follower can always give best response.
In this case, the convergence of the total framework can be achieved.
Notably, our analysis is just limited to the tabular case.

\begin{figure}[htbp]
    \centering
    \includegraphics[width=0.8\textwidth]{./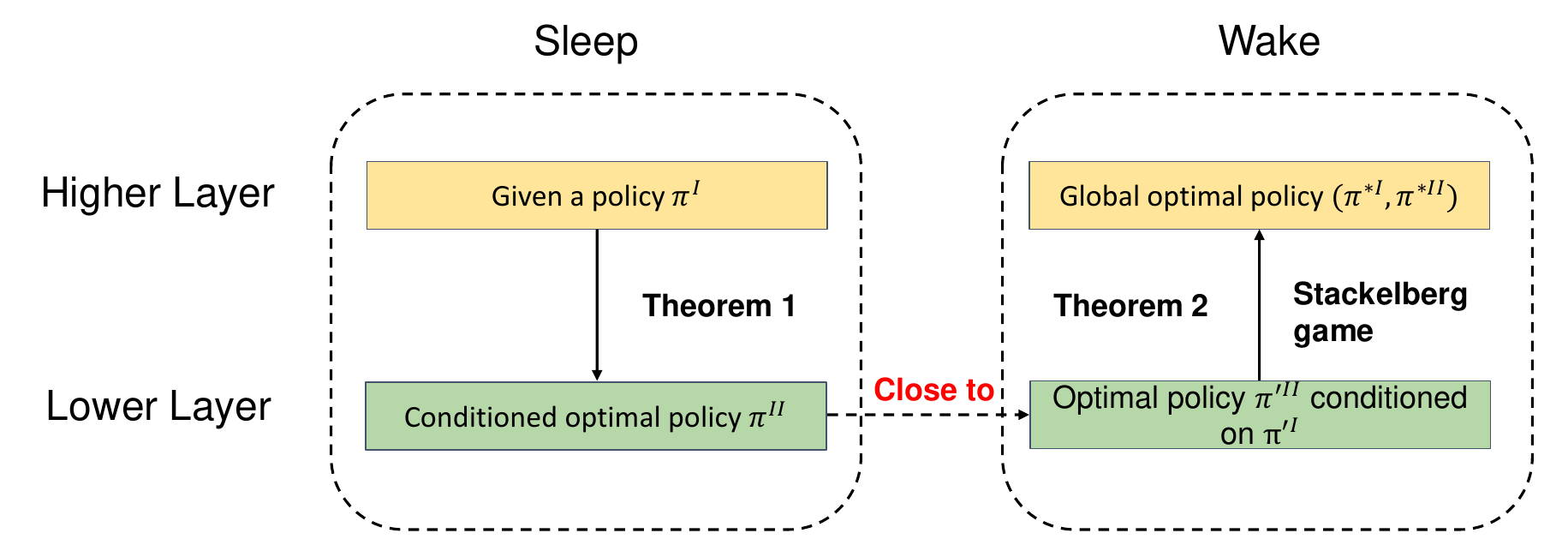}
    \caption{Wake-sleep training procedure}
    \label{fig:training}
\end{figure}
\section{Experimental results}\label{sec:experiment}
\subsection{SSCO problem instances}

In this article, we examine two broad classes of problems: the propagation problem (high stochasticity) and the route planning problem (low stochasticity). 
For the propagation problem, we use an adaptive version of the classic IM problem. 
For the route planning (RP) problem, we consider a model tailored for travel route planning. 
Notably, instead of showing the standard deviation of test results, we perform two-sample t-tests to show the effectiveness of our framework in SSCO scenarios.

\subsection{Comparison with baselines} \label{sec:baseline}






\textbf{Baselines} \quad 
To evaluate the effectiveness of our approach, we compare it against appropriate baselines for each problem, measuring the cumulative reward obtained by various methods. 
Each baseline is named in the form of A-B, where A is the higher-layer algorithm, and B is the lower-layer algorithm.
For the AIM problem, we use the baselines mentioned in \citet{tong2020time}: average policy, static policy, and normal policy at the high level. 
The static policy divides the time period $T$ into $d$ cycles, allocating the same resources only on the first day of each cycle. 
The normal policy allocates all resources at the beginning. 
At the lower level, we use a degree-based greedy strategy and a well-designed score-based strategy. 
The score for each node $v$ is defined as $s_v = \sum_{u \in \mathcal{N}_{\mathrm{in}}(v)} p(u,v)$, where $\mathcal{N}_{\mathrm{in}}(v)$ denotes the set of inactive nodes among the neighbors of $v$, and $p(u,v)$ is the probability that $u$ can successfully activate $v$.
This method takes into account the characteristics of the problem and uses expected values to score each node, showing strong performance in our experiments. 
For the RP problem, we use genetic algorithms (GA) and greedy algorithms. 
We train and test on graphs of the same size here.
The results are shown in Tables \ref{tab:result_aim}-\ref{tab:result_rp}, where $T$ and $K$ are chosen according to \citet{tong2020time}.

\begin{table*}[ht!]
    \centering
    \caption{Experimental results for AIM, $n=200$. All cases have p-values $\le 0.05$.}
    \label{tab:result_aim}
    \begin{tabular}{l|cccccccc}
        \toprule
        Method & $T,K=10,10$ & $T,K=10,20$ & $T,K=10,30$ & $T,K=20,10$ \\
        \midrule
        \textbf{WS-option} & \textbf{76.00} & \textbf{118.56} & \textbf{129.06} & \textbf{80.95} \\
        average-degree & 67.92 & 104.54 & 122.50 & 72.18 \\
        average-score & 74.36 & 116.10 & 128.29 & 80.31 \\
        normal-degree & 69.28 & 101.50.34 & 109.39 & 63.47 \\
        normal-score & 75.05 & 111.89 & 118.78 & 70.68 \\
        static-degree & 70.02 & 105.25 & 122.37 & 70.57 \\
        static-score & 74.81 & 118.13 & 128.01 & 71.68 \\
        \bottomrule
    \end{tabular}
\end{table*}


\begin{table*}[ht!]
    \centering
    \caption{Experimental results for RP, $n=100$. All cases have p-values $\le 0.05$.}
    \label{tab:result_rp}
    \begin{tabular}{l|cccc}
        \toprule
        Method & $T,K=10,10$ & $T,K=10,20$ & $T,K=10,30$ & $T,K=20,10$ \\
        \midrule
        \textbf{WS-option} & \textbf{7.46} & \textbf{12.86} & \textbf{18.57} & \textbf{7.52} \\
        greedy & 6.29 & 12.02 & 15.68 & 6.73 \\
        GA & 6.79 & 11.65 & 15.70 & 6.93 \\
        \bottomrule
    \end{tabular}
\end{table*}

As shown in Tables \ref{tab:result_aim}-\ref{tab:result_rp}, our method performs consistently well across various settings.
As expected, our method outperforms the baseline methods in the AIM problem, which emphasizes structural information about the graph as well as feedback processing of stochasticity (whether the node is successfully activated or not). 
In our experiments, we use simple graph neural networks. In fact, employing more advanced graph neural networks could further improve the performance of our algorithm, but graph embedding is not the focus of our study. 
For the RP, our algorithm also shows advantages. 
The graph of RP is essentially a fully connected graph, which does not emphasize the structural information of the graph but focuses more on the distance between nodes. 
Nonetheless, the primary advantage of our algorithm is its ability to be applied to larger graphs without the need for retraining after a single training, which significantly reduces computational overhead compared to traditional algorithms. 
Additionally, cumulative reward during training for AIM problem is provided in Figure \ref{fig:cumu-reward} of Appendix \ref{appendix:cumu reward}.

\subsection{Assessing the learned policies}
    

We now assess the learned budget allocation and node selection policies using the AIM problem. 
By fixing one layer's policy, we compare the cumulative rewards obtained by only the other layer, enabling us to independently assess the policy of each layer.


\begin{table*}[ht!]
    \centering
    \caption{Cumulative rewards when varying one layer’s policy while the other layer remains fixed. All cases have p-values $\le 0.05$.}
    \label{tab:result2}
    \begin{tabular}{l|lcccc}
        \toprule
        & & \multicolumn{4}{c}{lower layer fixed (using the learned policy)} \\
        Setting && \textbf{WS-option} & average & normal & static \\
        \midrule
        $T,K=10,10$ && \textbf{76.79} & 71.45 & 75.27 & 74.85 \\
        $T,K=10,20$ && \textbf{127.51} & 126.35 & 120.26 & 125.46 \\
        \midrule
        & & \multicolumn{4}{c}{higher layer fixed (using the average policy)} \\
        Setting && \textbf{WS-option} & degree & score \\
        \midrule
        $T,K=10,10$ && \textbf{71.45} & 62.55 & 69.15 \\
        $T,K=10,20$ && \textbf{126.35} & 118.75 & 125.02 \\
        \bottomrule
    \end{tabular}
\end{table*}

To evaluate the higher layer, we fix the lower layer using its learned policy and vary the budget allocation policy. 
Conversely, to evaluate the performance of the lower layer, we fix the higher layer using the average budget allocation policy and vary the node selection policy. 
As shown in Table \ref{tab:result2}, our algorithm performs well even when applied to a single layer. 
Both the node selection and the learned budget allocation strategies yield better results compared to the baseline approach. The learned budget allocation policy is showed in Appendix \ref{appendix:allocation}, Figures \ref{fig:budget-allocation1} to \ref{fig:budget-allocation3}.


\subsection{Generalization to larger graphs}
    



Most excitingly, our algorithm generalizes well to larger graphs. 
This means that models trained on small graphs can be effectively applied to larger ones. 
The reason for this generalization ability lies in the framework we use, which does not include any parameters related to graph size or number of nodes. 
This allows us to use the same parameters for graphs of different sizes.

\begin{table*}[h]
\centering
\caption{AIM: Generalization to larger graphs. All cases have p-values $\le 0.05$.}
\label{tab:result3}
\begin{tabular}{l|cccc}
\toprule
& 200 & 400 & 700 & 1000 \\ \midrule
\textbf{WS-option} & \textbf{120.88} & \textbf{176.13} & \textbf{204.87} & \textbf{221.16} \\ 
average-degree & 110.18 & 144.93 & 155.85 & 167.58 \\ 
average-score & 119.80 & 158.98 & 165.24 & 176.31 \\ 
normal-degree & 103.22 & 152.46 & 186.01 & 210.57 \\ 
normal-score & 106.47 & 166.45 & 199.41 & 220.81 \\ 
static-degree & 109.62 & 155.08 & 178.66 & 198.44 \\ 
static-score & 118.86 & 171.91 & 188.44 & 204.16\\ 
\bottomrule
\end{tabular}
\end{table*}

We train the model on graphs with 50-100 nodes with $T=10, K=20$. Table \ref{tab:result3} and Table \ref{tab:result4} (see Appendix \ref{appendix:generalization}) present the results of our tests on different graph sizes. 
As shown, even though the model was trained on small graphs, it still exhibits excellent performance on larger graphs. 
This result demonstrates that our algorithm has good scalability and generalization ability to effectively deal with large-scale graph structure problems.

\section{Conclusion}
We propose an option-based HRL framework for solving complex SSCO problems. 
The core of our approach is to use the wake-sleep training procedure and layer-wise learning method selection to make the system more stable without interfering with each other, as well as to facilitate our learning process by giving a clear definition of MDPs of two layers. 
Through extensive experimental evaluations, we demonstrate the strong performance of our algorithm in solving the SSCO problem, its ability to learn effective budget allocation and node selection policies, and its strong generalization to larger graphs. 
However, a notable limitation of our algorithm is that each new problem needs a new graph embedding technique, which adds complexity to the framework's implementation across different scenarios.
We leave the investigation of model-agnostic graph embedding techniques for RL-based solutions to SSCO as future work, potentially borrowing ideas of recent advancements on graph foundation models \cite{liu2023towards}.





\newpage

\section*{Acknowledgements}

This research was partially supported by the National Science Foundation under IIS-2348405. The authors acknowledge William \& Mary Research Computing for providing computational resources and/or technical support that have contributed to the results reported within this paper.

\bibliographystyle{iclr2025_conference}
\bibliography{HRL4CO,rl} 

\newpage
\appendix
\tableofcontents

\newpage

\section{Theoretical Insights}\label{appendix:insights}

\subsection{Hierarchical Learning for Bi-Level Optimization}
Based on the problem formulation in \eqref{eq:formulation}, we provide a high-level insight into the nature of this problem, which motivates the use of HRL. 
Specifically, the SSCO problem can be reformulated as a bi-level optimization problem:
\begin{align} 
&\max_{\pi^h} && J(\pi^I, \pi^{II*}(\pi^I)) \label{eq:higher_level_objective} \\
&\text{subject to} && \pi^{II*}(\pi^I) \in \arg\max_{\pi^I} J(\pi^I, \pi^{II}) \label{eq:lower_level_objective}
\end{align}
where $J=\sum\nolimits_{t=1}^{T} r_t(S_t)$, $\pi^I=\{K_t\}$, and $\pi^{II}=\{S_t\}$.
Here, $\pi^{I}$ is constrained by \eqref{eq:higher-layer constraint}, and $\pi^{II}$ satisfies the constraints in \eqref{eq:lower-layer constraint}. 

The formulation highlights that the SSCO problem inherently exhibits a bi-level optimization structure.
The higher-level policy $\pi^{I}$  determines the budget allocation strategy (e.g., $\{K_t\}$), while the lower-level policy $\pi^{II}$ selects the corresponding subsets (e.g., $\{S_t\}$) based on the higher-level decisions. 

Given this hierarchical dependency, it is natural and intuitive to employ a hierarchical learning algorithm such as HRL to address such problems effectively.

In contrast, traditional single-agent RL methods struggle with this bi-level optimization problem due to the following limitations:
\begin{itemize}
    \item \textbf{Hierarchical Dependency}: Single-agent RL lacks the structure to handle the interdependent objective, as the lower-level policy $\pi^{II}$ directly depends on higher-level decisions $\pi^I$.
    \item \textbf{Exploration Efficiency}: HRL separates exploration across two tractable levels, improving learning effiency compared to single-agent RL's flat exploration approach, which is one of the HRL's greatest advantages \citep{wen2020efficiency}.
    \item \textbf{Scalability and Interpretability}: Optimizing the joint action space of both levels in single-agent RL is computationally intractable. HRL, by decoupling the problem into manageable subproblems, offers a scalable, interpretable, and modular framework for solving complex hierarchical tasks.
\end{itemize}

\subsection{Intuition behind Wake-Sleep Option Framework} \label{sec:algo intuition}
The core idea for solving this bi-level optimization problem is to decompose it into two separate single-level optimization problems.

When the lower layer provides a conditionally optimal solution, denoted as $\pi^{II*}(\pi^I)$, the bi-level problem reduces to a single-layer problem, which corresponds to the objective defined in \eqref{eq:higher_level_objective}. 
In this context, the task is just to find the optimal policy for the higher layer.
Conversely, when the higher-layer policy is fixed, the problem becomes solving the sub-optimization objective in \eqref{eq:lower_level_objective}, which is to find the corresponding conditionally optimal lower-layer policy.
Building on this decomposition, the framework iteratively refines the higher-layer policy by alternating between the two optimization processes. 
Specifically, a higher-layer policy is first provided, and the corresponding conditionally optimal lower-layer policy is determined. 
This iterative procedure continues until the global optimal solution is achieved.

However, in the context of the SSCO problem, the role of lower layer is to perform node selection, while the higher layer only adjusts the termination condition. 
Due to the nature of this task, we claim that the conditionally optimal lower-layer policy remains similar across different higher-layer policies $\pi^I$.
Given this, the wake-sleep procedure is employed. 
During the sleep stage, the lower layer determines a near-optimal policy. 
Subsequently, the two layers are trained jointly during the wake stage. 
Since the lower layer has already converged to a close-to-optimal policy, the joint training further refines both the higher-layer policy and the lower-layer policy, ensuring that the global optimal solution is achieved.

\subsection{Simplification of Options in the Lower Layer}

In this section, we justify our simplification of options and provide a deeper understanding of the option framework within our HRL approach. 
First, we revisit the definition of an option $o$ in \ref{sec:option framework}. 
In the traditional option framework, an option $o$ is defined as a tuple $(I, \pi^{II}, \beta)$, where:

\begin{itemize}
    \item $I$ is the initiation set, which specify the states where the option can be initialized. 
    \item $\pi^{II}$ is the lower-policy that the agent follows while the option is being executed.
    \item $\beta$ is the termination condition that determines when the option should terminate.
\end{itemize}
Options allow the agent to execute temporally extended actions, thereby enabling more efficient exploration and decision-making by abstracting lower-level actions into higher-level policies.

\subsubsection{Why Simplify Options?}
In our specific bi-level optimization problem, the primary role of the option in the lower layer is to determine when the lower-level policy $\pi^{II}$ should terminate its node selection process. 
Specifically, the higher-level budget allocation $K_t$ implicitly decides the number of nodes the lower layer should select before terminating. 
Given this context, we simplify options due to the following reasons.

\paragraph{Redundant Information and Enhanced Interpretability}
The numerical value of the budget $K_t$ does not provide additional useful information beyond determining the number of node selections. Therefore, conveying the exact budget value to the lower layer is unnecessary and does not contribute to the termination decision. By representing options as binary indicators, we clarify whether the lower layer should continue or terminate, enhancing the interpretability of the policy.

\paragraph{Improved Learning Efficiency}
Reducing the option to a binary indicator allows the lower layer to focus solely on a straightforward termination condition, without needing to interpret specific budget values. This simplification minimizes the complexity of the learning task, leading to faster convergence and more stable learning processes.

Therefore, we simplify the option passed to the lower layer to a binary indicator:
\begin{itemize}
    \item \( o^{II}_t = 1 \): Indicates that the lower layer should continue selecting nodes.
    \item \( o^{II}_t = 0 \): Indicates that the lower layer should stop selecting nodes and interact with the environment.
\end{itemize}

\subsubsection{Justification of Budget Irrelevance}
It is important to note that the specific numerical value of the budget $K_t$    does not inherently affect the termination condition beyond determining the number of node selections. 
The budget serves as a constraint that limits the range of the lower layer's actions but does not need to be explicitly taken as part of the option. 
By abstracting the budget into a binary termination signal, we retain the essential functionality required for effective decision-making without introducing additional complexity.

Overall, this design choice strikes a balance between simplicity and functionality, leveraging the hierarchical structure of our problem to enhance learning efficiency and policy interpretability.

\subsection{Lower-Layer Reward Design }

The reward structure in the lower layer is a critical component of the WS-option framework, as it directly influence both the stability and convergence of the model.
Proper design of the reward ensures that the hierarchical optimization aligns seamlessly across layers, mitigating potential instability caused by sparse or inconsistent feedback signals.

\subsubsection{Challenges in Lower-Layer Reward Design}

The task of the lower layer is to select nodes based on the budget allocated by the higher layer. 
However, if the lower layer were to select all $K_t$ nodes in a single step, the size of the action space would grow combinatorially as $\binom{N}{K_t}$, where $N$ is the total number of (available) nodes.
This combinatorial explosion makes the optimization process computationally infeasible for large $N$ and $K_t$.
Therefore, we adopt a sequential selection approach where the lower layer selects one node at a time and repreat this process for $K_t$ times. 
This strategy effectively reduces the action space size from $\binom{N}{K_t}$ to $N$ per selection step, significantly enhancing the tractability of the optimization problem. 

However, this sequential approach introduces a new challenge: sparse rewards. 
Specifically, the environment only provides feedback after the final node in the sequence has been selected, resulting in delayed and aggregate reward signals. 
Such delayed feedback make it difficult for the model to identify the contribution of individual node selections to the overall reward.

\subsubsection{Normalization of Marginal Reward}

We normalize marginal rewards to ensure that the total reward of the lower layer for each time step $t$ is the same as the reward of the higher layer, which is
\begin{equation}
    \sum_{u \in a_t^{II}} r^{II}_{t,u} + r^{II}_{t, \emptyset} = r_t^I = R(s^I_t, o^I_t, a_t^{II}).
\end{equation}

At each time step, the reward $R(s^I_t, o^I_t, a_t^{II})$ is determined jointly by both the higher-layer option $o^I_t$ and the lower-layer action $a_t^{II}$. 
Since both layers share the same reward and objective, it is essential that the rewards remain aligned.

When the lower-layer actions are decomposed into sequential sub-actions (e.g., selecting one node at a time), the individual rewards for each sub-action must sum up to the total reward $R(s^I_t, o^I_t, a_t^{II})$.
Without normalization, the reward distribution across these sub-actions could deviate, leading to inconsistencies between the lower layer's objective and the higher layer's global reward signal.
As a result, by normalizing the marginal reward, we ensure: 1) reward consistency. The total reward for the lower layer matches the original reward defined by the hierarchical framework, maintaining coherence between the two layers; and then 2) objective alignment. Both the higher layer and lower layer optimize the same objective, preserving the shared purpose of achieving the overall framework's goals.

\subsection{Convergence Analysis}\label{proof:convergence}
\subsubsection{Proof of Theorem 1} \label{proof:theorem1}
\begin{theorem*}
\textnormal{(Intra-option policy convergence)}. In our WS-option framework, given any Markov transition $(s_\tau, o_\tau, a_\tau, r_\tau, s_{\tau+1}, o_{\tau+1})$, the Q-value function $q^{II}(s_\tau, o_\tau, a_\tau)$ converges to the optimal Q-value function $q^{II}_*(s_\tau, o_\tau, a_\tau)$ with probability 1, assuming that the higher-layer policy is fixed.
\end{theorem*}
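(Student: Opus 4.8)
The plan is to reduce the lower-layer learning problem, once the higher-layer policy is frozen, to a standard finite-state Markov decision process and then invoke the classical convergence guarantee for tabular Q-learning. First I would make the reduction precise. Fixing the higher-layer policy fixes the budget sequence $\{K_t\}$, which in turn determines the binary termination structure of the intra-option policy: the option $o^{II}$ remains $1$ until $K_t$ nodes have been selected at time step $t$ and then switches to $0$. By unrolling the node-selection sub-steps into a single time index $\tau$ and augmenting the state with the number of nodes already chosen in the current step, the triples $(s_\tau, o_\tau, a_\tau)$ range over a finite set, and the one-step dynamics---including the stochastic activation process $\eta$ and the normalized reward in \eqref{eq:reward}---induce a stationary transition kernel with bounded rewards. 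This places us squarely in the tabular setting.

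Second, I would define the intra-option Bellman optimality operator $\mathcal{T}$ acting on $q^{II}$ and verify that it is a contraction in the sup-norm, so that $q^{II}_*$ is its unique fixed point. The operator's target bootstraps on $\max_{a'} q^{II}(s_{\tau+1}, o_{\tau+1}, a')$ while the option continues and returns the accumulated reward once it terminates; because the termination is deterministic under the frozen higher layer, the rewards are bounded, and every episode ends after at most $K$ selections across the $T$ steps, the standard estimate $\|\mathcal{T} q_1 - \mathcal{T} q_2\|_\infty \le \gamma \|q_1 - q_2\|_\infty$ (or the analogous contraction from the finite episodic horizon in the undiscounted case) goes through.

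Third, I would cast the TD update of Algorithm \ref{algo:training} as a stochastic-approximation recursion $q_{\tau+1} = (1-\alpha_\tau) q_\tau + \alpha_\tau(\mathcal{T} q_\tau + w_\tau)$, where $w_\tau$ is a zero-mean noise term, and apply the classical convergence theorem for such recursions, following the stochastic-approximation arguments of Jaakkola et al.\ and Tsitsiklis. This reduces the claim to checking three conditions: (i) the learning rates satisfy $\sum_\tau \alpha_\tau = \infty$ and $\sum_\tau \alpha_\tau^2 < \infty$; (ii) every triple $(s,o,a)$ is visited infinitely often, which follows from the $\epsilon$-greedy exploration (rate $\epsilon_2$) over the finite action space; and (iii) $\E[\,\text{target}\mid\text{history}\,] = (\mathcal{T} q_\tau)(s_\tau, o_\tau, a_\tau)$ with bounded conditional variance, which holds because the marginal rewards are estimated from finitely many bounded simulations.

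I expect the main obstacle to be the second step: confirming that the contraction survives the non-standard reward normalization in \eqref{eq:reward}. The denominator $\sum_{u} m_{t,u}$ couples the per-node rewards, so I would need to argue that, conditioned on the fixed higher-layer option, this rescaling yields a well-defined bounded immediate reward that leaves the contraction constant intact and whose per-step sum recovers the higher-layer reward $r^I_t$, preserving consistency between the two layers. A secondary subtlety is guaranteeing infinite visitation across the augmented sub-step states, since the number of sub-steps is itself dictated by the fixed budget; I would resolve this by noting that persistent $\epsilon$-greedy exploration under the fixed policy induces a recurrent class covering all reachable triples.
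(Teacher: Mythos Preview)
Your proposal is correct and follows essentially the same approach as the paper: both establish that the intra-option Bellman operator is a $\gamma$-contraction in sup-norm and then invoke the stochastic-approximation theorem of Jaakkola et al. The paper's proof is terser---it writes the tabular update as a two-case rule (within-option bootstrap versus termination, where it uses the identity $q^{I}(s_{t+1},o_{t+1})=\max_a q^{II}(s_{t+1},o_{t+1},a)$ to close the contraction in the terminal case) and checks the contraction inequality directly---while you perform the same argument more abstractly by first flattening the hierarchy into a single finite MDP and then spelling out the Robbins--Monro, infinite-visitation, and bounded-noise conditions that the paper simply takes for granted.
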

\begin{proof}
The update formula of intra-option Q-value in tabular form is as follows:
\begin{equation}
q^{II}_{n+1}(s_{t,\tau}, o_{t,\tau}, a_{t,\tau})=
\begin{cases}
r_{t,\tau} + \gamma\underset{a}{\max} q^{III}_{n}(s_{t,\tau+1}, o_{t,\tau+1}, a), \quad &\text{If $o_{t,\tau}>0$ and $a_{t,\tau}\ne a_{t,\phi}$}, \\
r_{t,\tau} + \gamma  q^{I}(s_{t+1}, o_{t+1}), &\text{otherwise.}
\end{cases}
\end{equation}
According to Theorem 1 from \cite{jaakkola1993convergence}, we need to show the update operator is a contraction mapping.
For $o_{t,\tau}>0$ and $a_{t,\tau}\ne a_{t,\phi}$,
\begin{equation}
    \begin{aligned}
&|q^{II}_{n+1}(s_{t,\tau},o_{t,\tau},a_{t,\tau}) - q_{*}^{II}(s_{t,\tau},o_{t,\tau},a_{t,\tau})|  \\
=& |r_{t,\tau} + \gamma\underset{a}{\max} q^{II}_{n}(s_{t,\tau+1}, o_{\tau+1}, a) - r_{t,\tau} - \gamma\underset{a}{\max} q^{II}_{*}(s_{t,\tau+1}, o_{t,\tau+1}, a)| \\
=&\gamma|\underset{a}{\max} q^{II}_{n}(s_{t,\tau+1}, o_{t,\tau+1}, a)-\underset{a}{\max} q^{II}_{*}(s_{t,\tau+1}, o_{t,\tau+1}, a)| \\
\leq& \gamma \underset{a}{\max} |q^{II}_{n}(s_{t,\tau+1}, o_{t,\tau+1}, a)-q^{II}_{*}(s_{t,\tau+1}, o_{t,\tau+1}, a)|
\end{aligned}
\end{equation}
For the other case, 
\begin{equation}
    \begin{aligned}
|q^{II}_{n+1}(s_{t,\tau},o_{t,\tau},a_{t,\tau}) - q_{*}^{II}(s_{t,\tau},o_{t,\tau},a_{t,\tau})| 
= |r_{t,\tau} + \gamma  q^{I}(s_{t+1}, o_{t+1}) - r_{t,\tau} - \gamma  q^{I}_{*}(s_{t+1}, o_{t+1})| \\
\end{aligned}
\end{equation}
Note that $q^{I}(s_{t+1},o_{t+1})=\underset{a}{\max}\ q^{II}(s_{t+1},o_{t+1},a)$. Therefore, we have
\begin{equation}
    \begin{aligned}
&|q^{II}_{n+1}(s_{t,\tau},o_{t,\tau},a_{t,\tau}) - q_{*}^{II}(s_{t,\tau},o_{t,\tau},a_{t,\tau})| \\=&\gamma|\underset{a}{\max} q^{II}_{n}(s_{t+1}, o_{t+1}, a)-\underset{a}{\max} q^{II}_{*}(s_{t+1}, o_{t+1}, a)| \\
\leq& \gamma \underset{a}{\max} |q^{II}_{n}(s_{t+1}, o_{t+1}, a)-q^{II}_{*}(s_{t+1}, o_{t+1}, a)| \\
\end{aligned}
\end{equation}
So we prove the convergence of the intra-option policy.
\end{proof}

\subsubsection{Proof of Theorem 2}\label{proof:theorem2}
\begin{theorem*}
\textnormal{(Option policy convergence)}. In our WS-option framework, given any state-option pair $(s_t, o_t)$, the Q-value function $q^{I}(s_t, o_t)$ converges to the optimal Q-value function $q^{I}_*(s_t, o_t)$ with probability 1, assuming that for any given higher-layer policy, the lower-layer policy always provides the corresponding conditionally optimal response.
\end{theorem*}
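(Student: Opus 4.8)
The plan is to reduce the higher-layer learning problem to Monte Carlo control on a well-defined finite Markov decision process, and then to invoke a stochastic-approximation convergence result in the spirit of Theorem~\ref{theorem:1} (following \cite{jaakkola1993convergence}), here specialized to the Monte Carlo return rather than the bootstrapped TD target. First I would exploit the theorem's hypothesis directly: once the lower layer returns the conditionally optimal response $\pi^{II*}(\pi^I)$ for every option, the one-step reward seen by the higher layer, $r^I_t = R(s^I_t, o^I_t, a_t^{II})$, and the stochastic transition $s^I_{t+1} = f(s^I_t, a_t^{II}) + \eta(s^I_t, a_t^{II})$ together specify a \emph{stationary} MDP over state-option pairs $(s^I_t, o^I_t)$. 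Because the global part of the state $s^I=(X,g)$ already encodes the remaining horizon $T_r$ and remaining budget $K_r$, the finite-horizon task is captured as a stationary MDP on this augmented state space, with a well-defined optimal action-value $q^I_*$; this is exactly the single-level reduction of the bi-level problem \eqref{eq:higher_level_objective}--\eqref{eq:lower_level_objective}.

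Second, I would write the Monte Carlo update used on line 15 of Algorithm~\ref{algo:training}: along each sampled trajectory the higher layer forms the return $G_t = \sum_{k=t}^{T} \gamma^{k-t} r^I_k$ and updates $q^I_{n+1}(s^I_t, o^I_t) = q^I_n(s^I_t, o^I_t) + \alpha_n\big(G_t - q^I_n(s^I_t, o^I_t)\big)$. Since $G_t$ is an unbiased sample of the action-value under the current higher-layer policy, this is a Robbins--Monro stochastic approximation targeting the Bellman optimality fixed point. I would then verify the three standard premises: (i) every state-option pair is visited infinitely often, which follows from the $\epsilon_1$-greedy exploration over the remaining budget in Algorithm~\ref{algo:run}; (ii) the step sizes satisfy $\sum_n \alpha_n = \infty$ and $\sum_n \alpha_n^2 < \infty$; and (iii) the behaviour policy is greedy in the limit, so that the sampled returns track the optimal values. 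Under these premises the cited theorem yields $q^I_n \to q^I_*$ with probability 1.

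The main obstacle is the non-stationarity of the target: the return $G_t$ depends on the options selected at later steps by the same evolving higher-layer policy, so the quantity being averaged is not fixed across iterations. I would handle this with the GLIE (greedy-in-the-limit-with-infinite-exploration) argument, showing that as $\epsilon_1 \to 0$ the behaviour policy converges to the greedy policy while still exploring infinitely often, so that the induced return distribution converges to the one under $q^I_*$. This is where I expect the real work to lie, and it also exposes a gap worth flagging: Algorithm~\ref{algo:run} uses a fixed $\epsilon_1$, so to obtain convergence to the true optimum rather than merely an $\epsilon$-soft optimum I would need the exploration rate to decay on a schedule compatible with the step sizes. A secondary subtlety is that the hypothesis supplies only a \emph{conditionally} optimal lower layer; I would argue, as in the surrounding Stackelberg-style discussion, that conditional optimality is exactly what makes the per-step higher-layer reward well-defined and stationary, so the interlayer dependence does not reintroduce bias into the Monte Carlo estimate, and the reduction to a fixed MDP is legitimate.
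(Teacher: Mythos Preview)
Your proposal is correct and follows essentially the same approach as the paper: both reduce the higher layer to a single-level MDP via the conditional-optimality hypothesis, write the Monte Carlo update as a Robbins--Monro iteration, and invoke the stochastic-approximation result of \cite{jaakkola1993convergence}. The paper's argument is considerably terser---it simply asserts $\mathbb{E}[q^{I}_{t}(s_{t},o_{t}) - g_{t}] = 0$ once the lower layer is fixed and concludes---whereas you correctly identify and engage with the non-stationarity of the return under the evolving higher-layer policy (the GLIE issue and the need for decaying $\epsilon_1$), a subtlety the paper's proof does not address.
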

\begin{proof}
The Q-value of the higher layer is updated by:
\begin{align}
q^{I}_{t+1}(s_{t},o_{t})=q^{I}_{t}(s_{t},o_{t})-\alpha_{t}(s_{t},o_{t})(q^{I}_{t}(s_{t},o_{t})-g_{t}),
\end{align}
where $g_{t}$ is the return estimated by MC methods. 
Again, according to the Theorem 1 from \cite{jaakkola1993convergence}, to establish the convergence, we need to show that $\mathbb{E}[q^{I}_{t}(s_{t},a_{t})-g_{t}]=0$.
Under the assumption that for any given higher-layer policy, the lower-layer always provides the best-response policy, the lower-layer policy remains fixed and conditionally optimal at each time step $t$.
at each time step $t$, the lower policy is fixed and the conditionally optimal. 
Consequently, the expected value of the observed return $g_t$ matches the Q-value estimate $q^{I}_{t}(s_{t},o_{t})$. 
Therefore, the condition $\mathbb{E}[q^{I}_{t}(s_{t},o_{t}) - g_{t}] = 0$ is satisfied.
As a result, the Q-value function $q^{I}$ converges to the optimal option Q-value $q^{I}_{*}$ with probability 1.
\end{proof}

\subsubsection{Interaction and Convergence of Hierarchical Layers}

The theorems presented above establish the convergence of two separate single-level optimization problems, as discussed in Section \ref{sec:algo intuition}. 
However, ensuring the convergence of the entire framework for bi-level optimization remains challenging due to the inherent interdependence between the two layers.

To address this, we alternate between the two optimization processes and further use our WS option framework, which effectively decouples their dependencies during training.
Specifically, the higher-layer policy provides guidance to the lower-layer policy, which then adapts to the given conditions and converges to its conditionally optimal response. 
Once the lower layer converges, the higher-layer policy is refined based on the updated performance metrics.

This iterative approach leverages the individual convergence of each layer, as proven in Theorems in \ref{proof:theorem1} and \ref{proof:theorem2}, to ensure the overall framework converges. 
By combining the independent convergence of the two layers, the framework achieves global optimization for the bi-level problem.

\newpage
\section{Algorithm Details}\label{appendix:algorithm details}

\subsection{Algorithm 3} \label{appendix:algo3}
\begin{algorithm}[htp]
\caption{WS-option: Store transitions}
\label{algo:store-trans}
\begin{algorithmic}[1]
\FOR{each transition $(s^I_t, o^I_t, a^{II}_t, r^I_t, s^{I}_{t+1})$ in $\mathcal{T}$}
    \STATE Compute the gain $g_t=\sum_{i=t}^T \gamma^{i-1}r^I_i$.
    \STATE Store the transition $(s^I_t, o^I_t, g_t)$ in the replay buffer $\mathcal{M}_1$.
    \STATE Set $s^{II}_{\tau} \gets s^I_t$ and $o^{II}_{\tau} \gets o^{I}_t$.
    \FOR{each node $u$ in the $a^{II}_t$}
        \STATE $s^{II}_{\tau+1}, o^{II}_{\tau+1} \gets \text{state transition}(s^{II}_{\tau}, o^{II}_{\tau}, u)$  
        \STATE Estimate the marginal reward $m^{II}_{\tau}$ by running 10 simulations.
        \STATE Store the intermediate transition $(s^{II}_{\tau}, o^{II}_{\tau}, u, m^{II}_{\tau}, s^{II}_{\tau+1}, o^{II}_{\tau+1})$ for later scaling.
        \STATE Update $s^{II}_{\tau} \gets s^{II}_{\tau+1}$ , $o^{II}_{\tau} \gets o^{II}_{\tau+1}$.
    \ENDFOR
    \FOR{each stored intermediate transition $(s^{II}_{\tau}, o^{II}_{\tau}, u, m^{II}_{\tau}, s^{II}_{\tau+1}, o^{II}_{\tau+1})$}
        \STATE Scale the reward $r^{II}_{\tau}= \frac{m^{II}_\tau}{\sum m^{II}_k} (r^I_t - r^{II}_{t,\emptyset})$.
        \STATE Store the transition $(s^{II}_{\tau}, o^{II}_{\tau}, u, r^{II}_{\tau}, s^{II}_{\tau+1}, o^{II}_{\tau+1})$ in the replay buffer $\mathcal{M}_2$.
    \ENDFOR
\ENDFOR
\end{algorithmic}
\end{algorithm}

\section{Implementation Details}\label{appendix:implementation details}

\subsection{Network architecture}\label{appendix:model}
\begin{figure}[htbp]
    \centering
    \includegraphics[width=0.8\textwidth]{./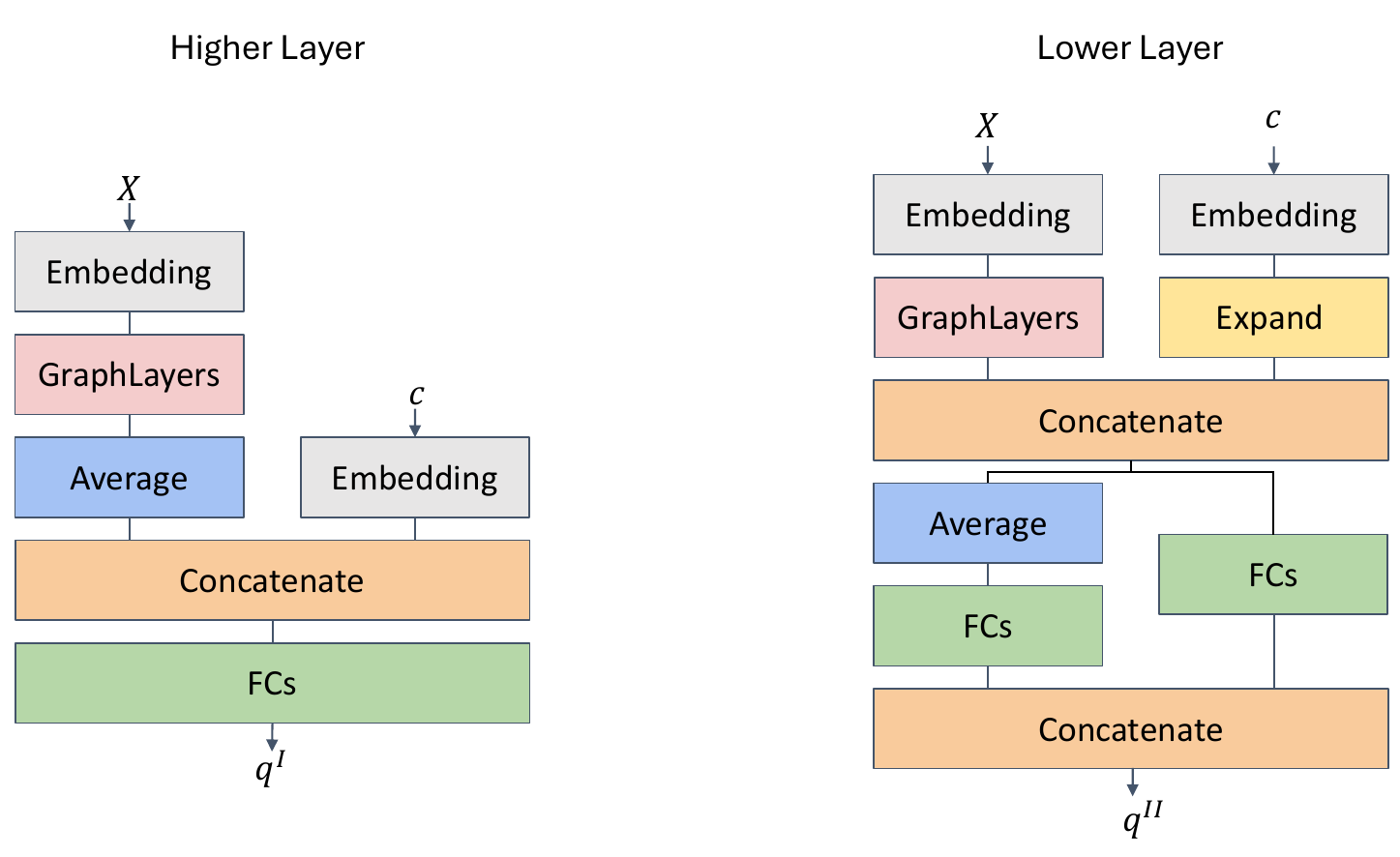}
    \caption{Network architecture}
    \label{fig:architecture}
\end{figure}
The network architecture consists of two primary components: the higher layer network and the lower layer network, as illustrated in Figure \ref{fig:architecture}. 
There are two inputs: the node feature $X$ and the context $c$. 
We use context $c$ to represent non-node feature information, such as global information $g$ and the option $o^I$, which is a one-dimensional vector. 
The basic idea is to use graph network layers to get node embeddings. 
Then, we concatenate the node embeddings with the context embedding, which are used as the input of linear layers.


\textbf{Higher layer network with action-in}\quad 
This network is used for estimating $q^I(s^I,o^I)$. 
It is worth noting that we adopt an action-in network structure, which differs from the traditional action-out structure in typical Q-learning frameworks like DQN \cite{mnih2013playing,mnih2015human}. 
In the action-in structure, actions are also taken as inputs. 
In our study, this means the option (budget) is an input. 
We choose the action-in structure for two main reasons: first, it can capture the properties of a particular state-action pair more accurately because of the ability to embed the action into potential features. 
More importantly, since the size of the budget is not used as a parameter of the network (i.e., the number of output nodes), this approach allows for more powerful generalization ability of the model, which is not possible with the action-out structure.

For the network architecture, we use graph network layers to obtain node embeddings $h$. 
An average readout function is then applied to get the graph embedding $\overline{h}$. 
After concatenating this with the context embedding, we use linear layers to compute $q^I(s^I,o^I)$.

\textbf{Lower layer network with action-out}\quad 
The lower layer network adopts a traditional "action-out" structure to estimate $q^{II}(s^{II}, o^{II}, a^{II})$. 
After obtaining the node embeddings $h$ through graph layers, we concatenate each node embedding $h_{i}$ with the context embedding $c$ (so we need to expand it first). 
Subsequently, we average the node embeddings to get the graph embedding, which is used to denote the null action.

\textbf{Details of the network}\quad
In our study, we use different architectures for two problems. 
For the AIM problem, we employ the S2V structure in \cite{khalil2017learning} with $T=3$ to obtain node embedding and graph embedding (64-dimensional) for the higher layer. 
Subsequently, we use a linear layer to get the embedding of the option (64-dimensional), which is also called context embedding. 
We then concatenate the graph embedding with the context embedding and apply three linear layers to get the Q-value. 
For the lower layer, we also use S2V structure but with $T=1$ to get both node embedding and graph embedding (64-dimensional). 
And using two linear layers for node embedding and graph embedding, respectively, to estimate the Q-value of all action ($n$ nodes and a null action). 
To tackle the RP problem, we use attention layers due to their effectiveness in capturing the features of distances and benefits associated with nodes.
For the higher layer, three attention layers with an embedding size of 64 are used to get the node and graph embeddings.
Additionally, for this problem, the context is the option, the current node and the starting node. 
We use a linear layer to get the option embedding and combine it with node embeddings of the current node and starting node to get the context embedding and concatenate them with the graph embedding.
The combined embedding is then passed through three linear layers to estimate the Q-value.
For the lower layer, a single attention layer is used to get the node embedding (64 dimensional). 
The context embedding is the same as the higher layer.
Notably, the embedding of the current node is employed as the representation for the null action instead of the graph embedding.
Finally, three linear layers are applied to get the Q-value.

\subsection{Adaptive influence maximization (AIM)}\label{description:aim}
AIM is an extension of IM \cite{kempe2003maximizing}, which aims to progressively select seed nodes in dynamic and uncertain environments to maximize the spread of information within a social network. 
AIM has diverse applications in fields such as viral marketing, information dissemination, and public health. 
In this paper, we use the Independent Cascade (IC) model as the information propagation model.

The problem is modeled on a directed graph $G=(V,E)$, where $V$ and $E$ are nodes and edges, respectively. Each node can be either active or inactive. Once a node $u$ becomes active (either by being selected as a seed or being activated by other nodes), it will attempt to activate each of its inactive neighbors $v$ with a probability $p(u,v)$ on the next day and cannot activate nodes afterwards. The probability $p(u,v)$ is constant and is set to $\frac{1}{d^{-}(v)}$ in our model, where $d^{-}(v)$ denotes the indegree of node $v$. Initially, all nodes are inactive, and before each day starts, we can select some seed nodes to activate. We assume a total budget $K$ (i.e., the total number of nodes that can be selected) and study how to allocate the seed nodes to maximize the influenced nodes within the time constraint of $T$ days \cite{tong2020time}.  

The MDP formulation of AIM is described in the following. The datasets used in the experiments are randomly generated using the \texttt{erdos\_renyi\_graph()} function from Python's NetworkX library, with a probability of $0.01$.

\textbf{MDP formulation} \quad
For the higher level, we denote the state of nodes as $X_t\in \{0,1\}^{|V|\times 3}$. 
The state of each node is represented as a one-hot vector, indicating that the node is in one of three possible states: inactive, active (able to activate other nodes), or removed (active but unable to activate other nodes).
What's more, the global information here is the remaining budget.
The option is the budget allocated to the current time step. 
The reward is the increase in the number of influenced nodes. 
For the state transition, for example, given the current state $s_t^I$ and action $a_t^{II}$ (selected nodes) from the lower layer, we first change the state of selected nodes $a_t^{II}$. For each node $v \in a_t^{II}$, 
\begin{equation}\label{eq:activate}
    X_t^{v} = [0, 1, 0].
\end{equation}
Now we consider the interaction with environment. Assume the set of current active nodes is $C_{\text{a}, t}$ and the set of their inactive neighbors is $C_{\text{ina}, t}$. Then, for each node $v \in C_{\text{ina}, t}$, the probability of being activated is 
\begin{equation}
    P(X_t^v = [0,1,0])=1-\prod\nolimits_{u \in \mathcal{N}(v) \cap C_{\text{a}, t}}(1-p(u,v)).
\end{equation}
where $\mathcal{N}(v)$ is the set of neighbors of node $v$, and $p(u,v)$ is the probability that node $v$ is activated by node $u$. Then, we update the state of newly activated nodes $\mathcal{A}$ as in equation \ref{eq:activate}.

For the lower level, the state remains consistent with the higher level but includes the option chosen by the higher level. 
The action is to select a node. 
The immediate reward is defined in Equation \ref{eq:reward}. 
The state transition occurs by simply setting the selected node to an active state.

\subsection{Route planning (RP)}\label{description:rp}
RP includes a broad class of optimization problems that seeks to find an optimal route to satisfy specific constraints and objectives. 
This problem is applicable to fields such as logistics, travel planning, and vehicle routing. 
In this paper, we focus on a travel planning scenario with \( N \) cities. 
The traveler must select \( K \) cities to visit, starting from a specific city. 
Each city \( c \) provides a profit \( p_c \), but the combined profit for visiting multiple cities in one day is a submodular function of these individual profits. 
The profit of each city changes dynamically over time and resets to 0 once visited. 
There is a maximum daily travel distance \( d_{\max} \), with penalties for exceeding it. 
The goal is to visit the selected cities within \( T \) days and return to the starting point, maximizing overall travel profits.

Specifically, the initial profit of each city follows from a uniform distribution $U(0.5,1)$ and updates as follows:
\begin{equation}\label{eq:RP update}
    p_{c, t+1} = p_{c, t} + \xi_{c, t+1}, \quad \text{for } t \ge 0,
\end{equation}
where $\xi_{c,t+1} \sim U(-0.05,0.05)$. For a set of cities $C$, the combined profit in one day is defined as 
\begin{equation}
    f(C) = \sum_{c \in C} p_c - \eta (\max(|C|-1, 0))^2,
\end{equation}
where $\eta$ is a coefficient set to 0.1 in our experiments. 
Exceeding the maximum daily travel distance $d_{\mathrm{max}}$ results in a penalty proportional to the excess distance (we choose 5). 
The $d_{\mathrm{max}}$ we set is $1.5 \max_{i,j} d_{ij}$, where $d_{ij}$ is the distance of city $i$ and $j$.
The datasets used in this problem (coordinates of the city) are also randomly generated using the \texttt{random.rand()} function from Python's Numpy library with a scale of 100.

\textbf{MDP formulation}\quad
For the higher level, the state includes: the coordinates of each city, their profits, the current city, and the starting city. 
The option is the same as in the AIM problem.
The reward is the combined profits of the selected cities at the current time step. 
The state transition involves the update of profits as described in Equation \ref{eq:RP update} and the change of the current city. 

For the lower level, the state remains consistent with the higher level but also includes the option chosen by the higher level. 
The action is to select a city. 
The marginal reward is simply the profit of the selected city, and the immediate reward can be obtained as described in Equation \ref{eq:reward}. Note that there is no propagation process, and the expected profit of each city does not change. 
Therefore, we can set $r^{II}_{t,\emptyset}$ to 0.
The state transition only involves the change in the current node and is deterministic.

The definitions of MDPs for these two problems are also shown in Table \ref{mdp}, and we will use the framework proposed in this paper to address these problems.

\begin{table*}[ht]
\centering
\caption{MDPs for AIM and RP problems}
\label{mdp}
\resizebox{\textwidth}{!}{
\LARGE
\begin{tabular}{c|c|c|c|c}
\hline
\textbf{Problem} &\textbf{Layer} &\textbf{State} &\textbf{Option/Action} &\textbf{Reward}  \\ 
\hline
\multirow{2}{*}{AIM} & Higher &active and inactive &choose a budget $k$  & number of new active nodes \\ 
\cline{2-5}
~ & Lower      &active and inactive, option &select a node    &scaled marginal reward of choosing a node \\ \hline
\multirow{2}{*}{RP} & Higher &coordinates, profits &choose a budget $k$  & combined profit of $k$ cities \\ 
\cline{2-5}
~ & Lower      &coordinates, profits, option &select a city &scaled profit \\ \hline
\end{tabular}
}
\end{table*}


\newpage
\subsection{Experiment hyper-parameter}\label{hyper-parameter}
To ensure the stability of the learning process in both problems, we employ Double Deep Q-Networks with an update frequency of 10 episodes.
The $\epsilon$-greedy strategy we use is initialized with $\epsilon=0.9$ and decays by a factor of 0.98 after each single training, with a minimum threshold of 0.1. 
We train each model for $20$ epochs and $10$ episodes.

All experiments on two COPs are implemented in Python and conducted on two machines. One is NVIDIA GeForce RTX 4090 GPU and Intel 24GB 24-Core i9-13900K. The other is NVIDIA V100 GPU and Inter 256 GB 32-Core Xeon E5-2683 v4.

\begin{table}[h!]
\centering
\caption{Experiment hyper-parameters for both problems}
\label{tab:hyper-parameters}
\begin{tabular}{|>{\centering\arraybackslash}p{4cm}|>{\centering\arraybackslash}p{4cm}|>{\centering\arraybackslash}p{4cm}|}
\hline
\textbf{Parameter} & \textbf{AIM} & \textbf{RP} \\ \hline
\text{Discount factor} ($\gamma$) & 0.99 & 0.99 \\ \hline
\text{Learning rate (higher-layer)} & $10^{-3}$ & $10^{-2}$ \\ \hline
\text{Learning rate (lower-layer)} & $10^{-4}$ & $10^{-4}$ \\ \hline
\text{Batch size} & 32 & 32 \\ \hline
\text{DDQN update frequency} & 10 episodes & 10 episodes \\ \hline
\text{Initial $\epsilon$ in $\epsilon$-greedy} & 0.9 & 0.9 \\ \hline
\text{Decay factor for $\epsilon$} & 0.98 & 0.98 \\ \hline
\text{Minimum $\epsilon$ threshold} & 0.1 & 0.1 \\ \hline
\text{Epochs} & 20 & 20 \\ \hline
\text{Episodes} & 10 & 10 \\ \hline
\text{Hardware} & \multicolumn{2}{c|}{\begin{tabular}[c]{@{}c@{}}NVIDIA GeForce RTX 4090, Intel 24GB 24-Core i9-13900K,\\ NVIDIA V100, Intel 256GB 32-Core Xeon E5-2683 v4\end{tabular}} \\ \hline
\end{tabular}
\end{table}

\subsection{Using T-Test for Evaluation}

To evaluate the performance of our model, we use a t-test rather than standard deviation. 
This approach is particularly suitable given the presence of both aleatoric and epistemic uncertainties.
Typically, we conduct multiple experiments to calculate the mean and standard deviation. 
Through this process, the epistemic uncertainty—arising from limited data—can be mitigated according to the law of large numbers. 
However, aleatoric uncertainty, which stems from inherent stochasticity in the environment, is irreducible and contributes to natural variability.
Consequently, the standard deviation in such scenarios primarily reflects aleatoric uncertainty. 
In our stochastic environment, this metric alone provides limited insight into the true performance of the model.

\section{Additional Results}

\subsection{Cumulative reward} \label{appendix:cumu reward}
\begin{figure}[htbp]
    \centering
    \includegraphics[width=3.3in]{./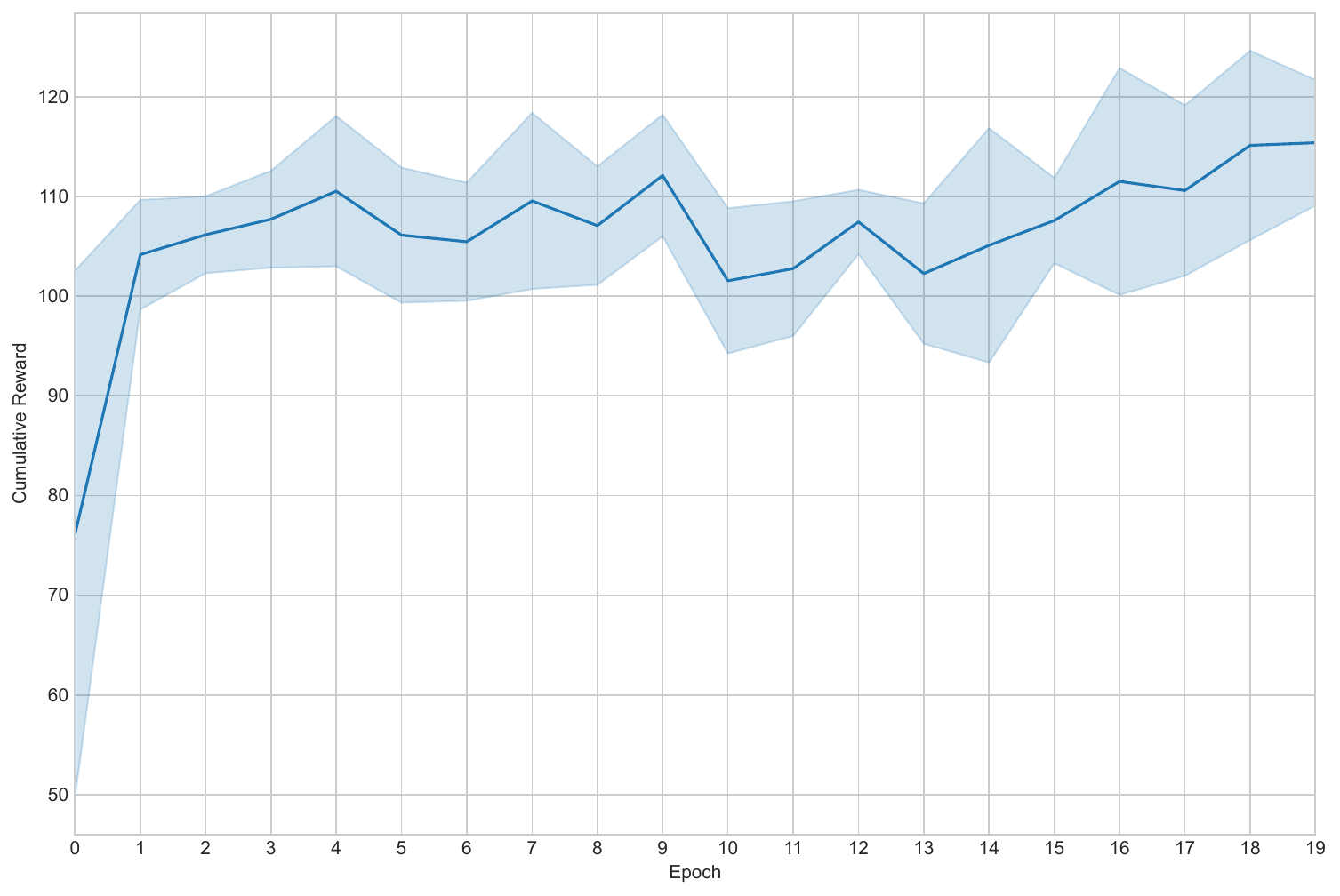}
    \caption{Cumulative reward during training for AIM $T=10, K=20$}
    \label{fig:cumu-reward}
\end{figure}

\subsection{Resource allocation} \label{appendix:allocation}
Figure \ref{fig:budget-allocation1}-- \ref{fig:budget-allocation3} shows the budget allocation for AIM problem. The last three days have been ignored due to none resource allocation.

\begin{figure}[htbp]
    \centering
    \includegraphics[width=3.3in]{./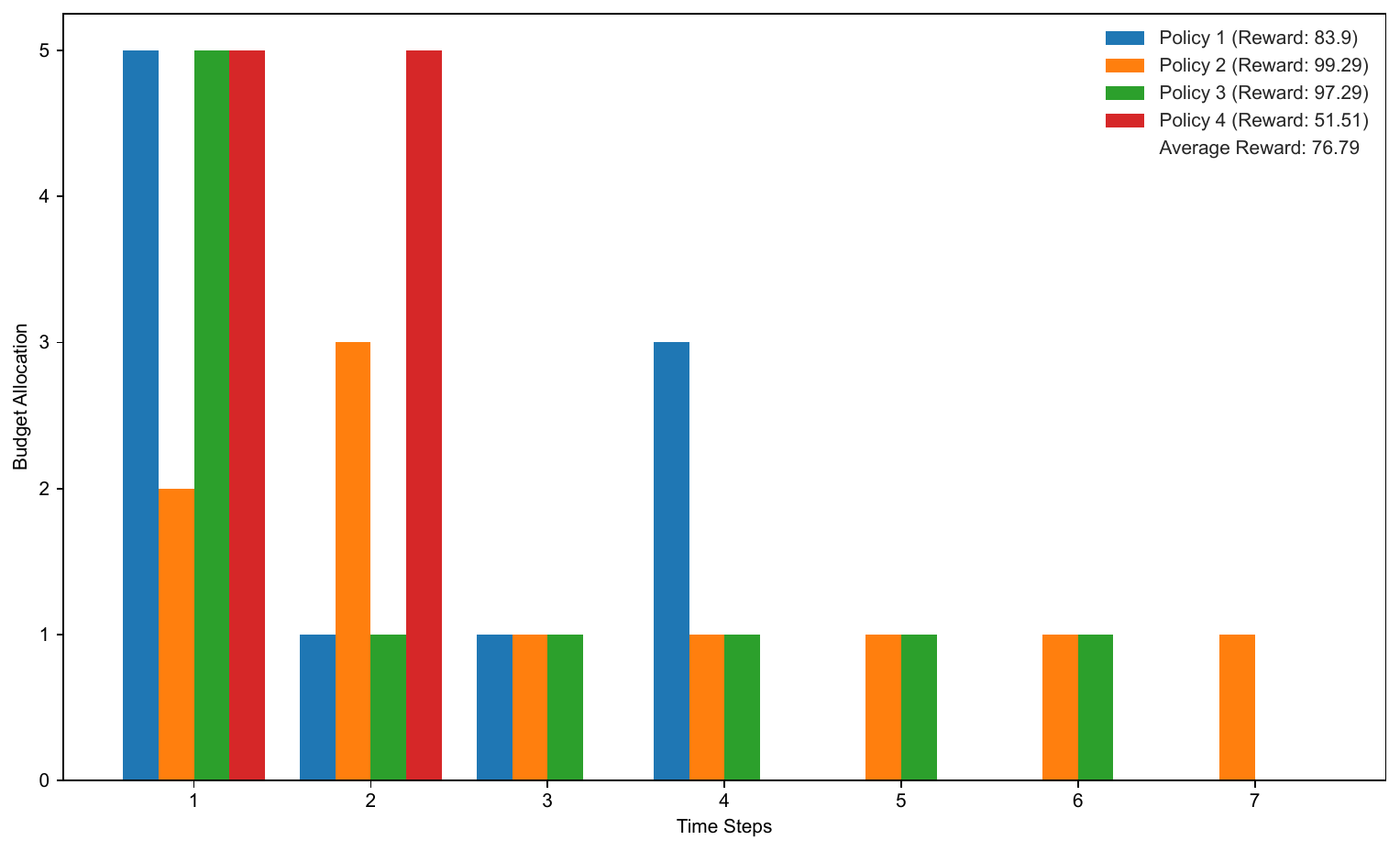}
    \caption{Budget allocation for $T=10,K=10$}
    \label{fig:budget-allocation1}
\end{figure}

\begin{figure}[htbp]
    \centering
    \includegraphics[width=3.3in]{./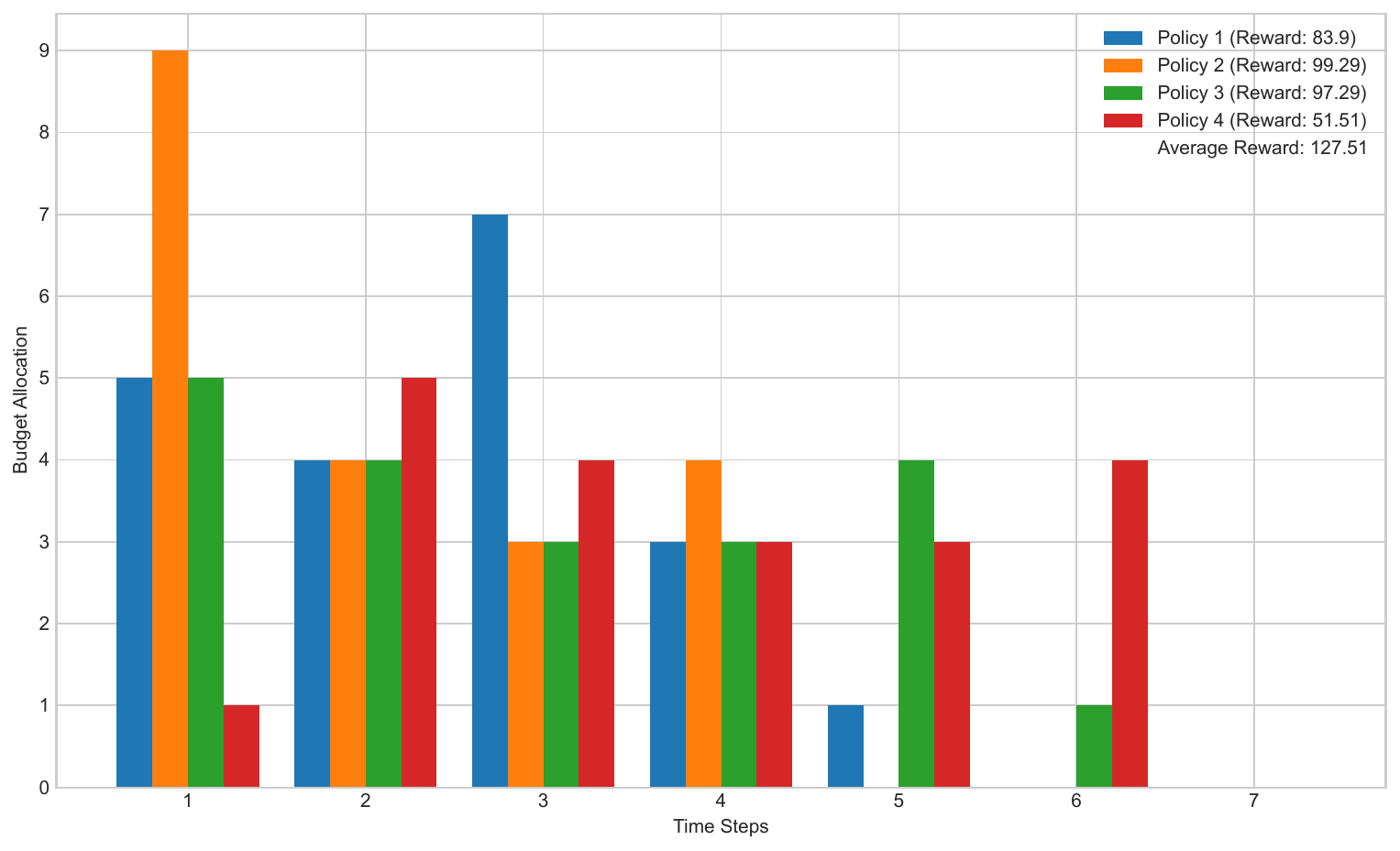}
    \caption{Budget allocation for $T=10,K=20$}
    \label{fig:budget-allocation2}
\end{figure}

\begin{figure}[htbp]
    \centering
    \includegraphics[width=3.3in]{./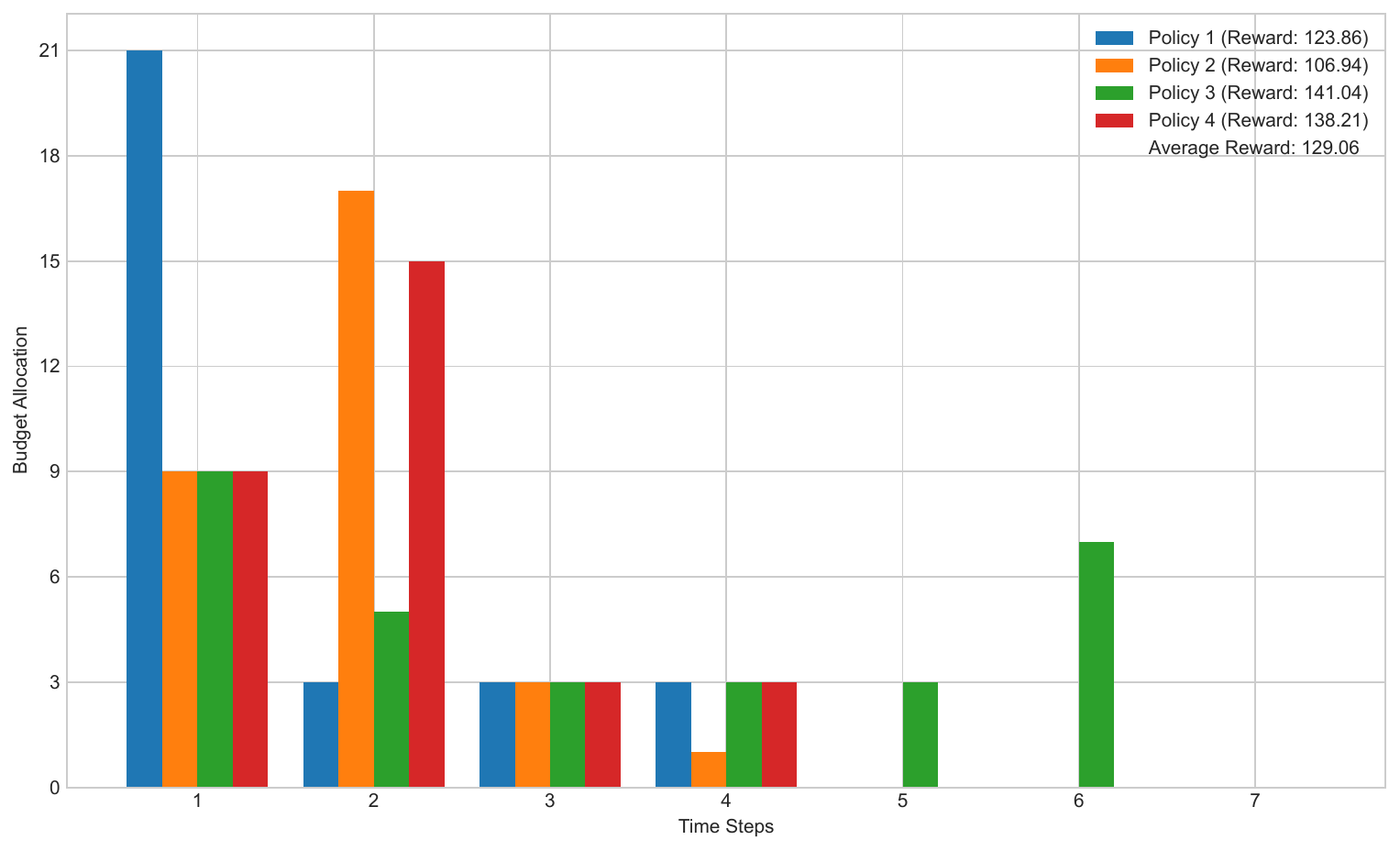}
    \caption{Budget allocation for $T=10,K=30$}
    \label{fig:budget-allocation3}
\end{figure}

\subsection{Generalization to larger graphs} \label{appendix:generalization}


\begin{table*}[htbp]
\centering
\caption{RP: Generalization to larger graphs. All cases have p-values $\le 0.05$.}
\label{tab:result4}
\begin{tabular}{l|cccc}
\toprule
& 200 & 400 & 700 & 1000 \\ \midrule
\textbf{WS-option} & \textbf{14.45} & \textbf{13.48} & \textbf{14.49} & \textbf{14.23} \\ 
greedy & 12.41 & 10.74 & 11.37 & 12.04 \\ 
GA & 12.62 & 10.96 & 11.37 & 11.77 \\ 
\bottomrule
\end{tabular}
\end{table*}

\subsection{Evaluation on the Real-World Data}

Real-world datasets often exhibit certain patterns, such as clustering characteristics. 
These patterns are easier for our RL-based algorithm to recognize, which can, in turn, achieve even better results.
To demonstrate this, we evaluate our model on \textit{power2500} dataset, which has 2500 nodes.

\begin{table}[htbp]
\centering
\begin{tabular}{lrr}
\toprule
Method         & T=10, K=20 &  \\
\midrule
\textbf{WS-option}      &     \textbf{496.37} &  \\
average-degree &     314.49 &  \\
average-score  &     317.16 &  \\
normal-degree  &     449.30 &  \\
normal-score   &     462.54 &  \\
static-degree  &     350.84 &  \\
static-score   &     393.10 &  \\
\bottomrule
\end{tabular}
\caption{Performance comparison of methods on the \textit{power2500} dataset (T=10, K=20).}
\label{tab:method_comparison}
\end{table}

The results in Table~\ref{tab:method_comparison} show that our RL-based algorithm (WS-option) significantly outperforms baseline methods, especially under scenarios characterized by clustering patterns.
This demonstrates the model's capability to exploit structural properties in real-world datasets.

\subsection{Ablation Study for Sleep Stage with Different Training Time}

\begin{table}[htbp]
\centering
\begin{tabular}{lrr}
\toprule
Method                       & T=10, K=20 &  \\
\midrule
WS-option (sleep with 1/2 $N$) &     118.56 &  \\
WS-option (sleep with 1/3 $N$) &     104.84 &  \\
WS-option (sleep with 2/3 $N$) &     118.59 &  \\
average-degree               &     104.54 &  \\
average-score                &     116.10 &  \\
normal-degree                &     101.50 &  \\
normal-score                 &     111.89 &  \\
static-degree                &     105.25 &  \\
static-score                 &     118.13 &  \\
\bottomrule
\end{tabular}
\caption{Performance comparison under different training time for the sleep stage (T=10, K=20).}
\label{tab:sleep_stage_results}
\end{table}

Note that we use the same random seed as in \ref{sec:baseline}.
From the results, we observe that ensuring the lower layer converges during the sleep stage is crucial. 
Specifically, when the model trains with $\frac{1}{2}N$ or $\frac{2}{3}N$, the performance is nearly identical, indicating sufficient convergence. 
In contrast, training with $\frac{1}{3}N$ leads to significantly degraded performance, demonstrating that insufficient convergence in the lower layer undermines the overall results.

\subsection{Ablation Study for Simplified Option in the Lower Layer}

The experimental results for evaluating the effect of simplifying the option in the lower layer are presented in Table~\ref{tab:ablation_study}. 
We observe that using a simplified option achieves slightly better performance compared to the non-simplified version (118.56 vs. 116.37). 
This shows that while the non-simplified option can still yield good results with sufficient iterations, it requires more training for agents to accurately identify termination conditions. 
Specifically, termination occurs only when the budget equals 0; when the budget is greater than 0, the behavior remains the same across methods.

\begin{table}[htbp]
\centering
\begin{tabular}{lrr}
\toprule
Method                            & T=10, K=20 &  \\
\midrule
\textbf{WS-option}                         &     \textbf{118.56} &  \\
WS-option (non-simplified option) &     116.37 &  \\
average-degree                    &     104.54 &  \\
average-score                     &     116.10 &  \\
normal-degree                     &     101.50 &  \\
normal-score                      &     111.89 &  \\
static-degree                     &     105.25 &  \\
static-score                      &     118.13 &  \\
\bottomrule
\end{tabular}
\caption{Ablation study results for simplified options in the lower layer (T=10, K=20).}
\label{tab:ablation_study}
\end{table}

\end{document}